\providecommand{\algorithmname}{Policy}
\newtheorem{assumption}{Assumption}
\newtheorem{defn}{Definition}
\newtheorem{lem}{Lemma}
\newtheorem{thm}{Theorem}
\newtheorem{prop}{Proposition}
\newtheorem{cor}{Corollary}
\global\long\def\real{\mathbb{R}}
\global\long\def\P{\Pr}
\global\long\def\E{\mathbb{E}}
\global\long\def\indic{\mathbbm{1}}
\global\long\def\ball{\mathcal{B}}
\global\long\def\mA{\mathcal{A}}
\global\long\def\mX{\mathcal{X}}
\global\long\def\mF{\mathcal{F}}
\global\long\def\mZ{\mathcal{Z}}
\global\long\def\mY{\mathcal{Y}}
\global\long\def\norm#1#2{\rho\left(#1,#2\right)}
\global\long\def\Ot{\widetilde{O}}
\global\long\def\Omegat{\widetilde{\Omega}}
\definecolor{darkgreen}{rgb}{0.0, 0.8, 0.2}
\definecolor{lightgray}{rgb}{0.8, 0.85, 0.85}
\title{Q-learning with Nearest Neighbors}
\author{
	Devavrat Shah \thanks{Both authors are affiliated with Laboratory for Information and Decision Systems (LIDS). DS is 
		with the Department of EECS as well as Statistics and Data Science Center at MIT. }  \\
	Massachusetts Institute of Technology\\
	\texttt{devavrat@mit.edu} \\
	\And
	Qiaomin Xie \footnotemark[1] \\
	Massachusetts Institute of Technology\\
	\texttt{qxie@mit.edu} \\
}
\begin{document}

\maketitle

\begin{abstract}
We consider model-free reinforcement learning for infinite-horizon discounted Markov Decision Processes (MDPs) 
with a \emph{continuous} state space and unknown transition kernel, when only a single sample path under an arbitrary policy 
of the system is available.  We consider the \emph{Nearest Neighbor Q-Learning} (NNQL) algorithm to learn the optimal
Q function using nearest neighbor regression method. As the main contribution, we provide tight finite sample analysis
of the convergence rate. In particular, for MDPs with a $d$-dimensional state space and the discounted factor 
$\gamma \in (0,1)$, given an arbitrary sample path with ``covering time'' $ L $, we establish that the algorithm is 
guaranteed to output an $\varepsilon$-accurate estimate of the optimal Q-function using  
$\Ot\big(L/(\varepsilon^3(1-\gamma)^7)\big)$ samples. For instance, for a well-behaved MDP, the covering time 
of the sample path under the purely random policy scales as $ \Ot\big(1/\varepsilon^d\big),$ so the sample complexity 
scales as $\Ot\big(1/\varepsilon^{d+3}\big).$ Indeed, we establish a lower bound that argues that the 
dependence of $ \Omegat\big(1/\varepsilon^{d+2}\big)$ is necessary. 

\end{abstract}

\section{Introduction}

Markov Decision Processes (MDPs) are natural models for a wide variety of sequential decision-making problems. It is well-known that the optimal control 
problem in MDPs can be solved, in principle, by standard algorithms such as value and policy iterations. These algorithms, however, are often not directly 
applicable to many practical MDP problems for several reasons. First, they do not scale computationally 
as their complexity grows quickly with the size of the state space and especially for continuous state space.
Second, in problems with complicated dynamics, the transition kernel of the underlying MDP is often unknown, or 
an accurate model thereof is lacking. To circumvent these difficulties, many model-free Reinforcement Learning (RL) 
algorithms have been proposed, in which one estimates the relevant quantities of the MDPs (e.g., the value functions or the optimal policies) from observed data generated by simulating the MDP. 

A popular model-free Reinforcement Learning (RL) algorithm is the so called Q-learning~\cite{Qlearning}, which directly learns 
the optimal action-value function (or Q function) from the observations of the system trajectories. A major advantage of Q-learning 
is that it can be implemented in an online, incremental fashion, in the sense that Q-learning can be run as data is being sequentially 
collected from the system operated/simulated under some policy, and continuously refines its estimates as new observations become 
available. The behaviors of standard Q-learning in {\em finite} state-action problems have by now been reasonably understood; in particular, 
both asymptotic and finite-sample convergence guarantees have been established~\cite{Tsitsiklis1994Qlearning,Jaakkola1994Qlearning,Szepesy1997Qlearning,Even-Dar2004Qlearning}. 

In this paper, we consider the general setting with \emph{continuous} state spaces. For such problems, existing algorithms typically make 
use of a parametric function approximation method, such as a linear approximation~\cite{Melo2008QLearningLinear}, to learn a compact representation of the action-value function.  
In many of the recently popularized applications of Q-learning, much more expressive function approximation method such as deep neural
networks have been utilized. Such approaches have enjoyed recent empirical success in game playing and robotics problems~\cite{Silver2016AlphaGo,Mnih2015Atari,Duan2016Robotics}.
Parametric approaches typically require careful selection of approximation method and parametrization (e.g., the architecture of neural networks). Further, rigorous convergence guarantees of Q-learning with deep neural networks are relatively less understood. 
In comparison, non-parametric approaches are, by design, more flexible and versatile. However, in the context of model-free RL with 
continuous state spaces, the convergence behaviors and finite-sample analysis of non-parametric approaches are  less understood. 

\noindent{\bf Summary of results.}  In this work, we consider a natural combination of the Q-learning with 
Kernel-based nearest neighbor regression for continuous state-space MDP problems, denoted as 
Nearest-Neighbor based Q-Learning (NNQL). As the main result, we provide {\em finite sample analysis} 
of NNQL for a {\em single, arbitrary} sequence of data for any infinite-horizon discounted-reward MDPs with continuous 
 state space. In particular, we show that the algorithm outputs an $\varepsilon$-accurate (with respect to supremum 
 norm) estimate 
of the optimal Q-function with high probability using a number of observations that depends polynomially on 
$ \varepsilon$, the model parameters and the ``cover time'' of the sequence of the data or trajectory of the data utilized.
For example, if the data was sampled per a completely random policy, then our generic bound suggests that 
the number of samples would scale as $\Ot(1/\varepsilon^{d+3})$ where $d$ is the dimension of the state space. 
We establish effectively matching lower bound stating that for {\em any} policy to learn optimal $Q$ function 
within $\varepsilon$ approximation, the number of samples required must scale as $\Omegat(1/\varepsilon^{d+2})$. 
In that sense, our policy is {\em nearly} optimal. 

Our analysis consists of viewing our algorithm as a special case of a general \emph{biased} stochastic 
approximation procedure, for which we establish non-asymptotic convergence guarantees. Key to our analysis 
is a careful characterization of the bias effect induced by nearest-neighbor approximation of the population 
Bellman operator, as well as the statistical estimation error due to the variance of finite, dependent samples.
Specifically, the resulting Bellman nearest neighbor operator allows us to connect the update rule of 
NNQL to a class of stochastic approximation algorithms, which have \emph{biased} noisy updates. 
Note that traditional results from stochastic approximation rely on unbiased updates 
and asymptotic analysis~\cite{Robbins1951stochastic,Tsitsiklis1994Qlearning}. 
A key step in our analysis involves decomposing the update into two sub-updates, which bears some similarity to the 
technique used by~\cite{Jaakkola1994Qlearning}. Our results make improvement in characterizing the finite-sample 
convergence rates of the two sub-updates. 

In summary, the salient features of our work are
\begin{itemize}[itemsep=0mm,leftmargin=*]
	\item \textbf{Unknown system dynamics:} We assume that 
	the transition kernel and reward function of the MDP 
	is unknown. Consequently, we cannot exactly evaluate the expectation required in standard dynamic programming algorithms (e.g., value/policy iteration). Instead, we consider a sample-based approach which learns the optimal value functions/policies by directly observing data generated by the MDP.  
	\item \textbf{Single sample path:} We are given a single, sequential samples obtained from the MDP operated under an
	arbitrary policy. 
	This in particular means that the observations used for learning are \emph{dependent}. Existing work often studies the easier settings where samples can be generated at will; that is, one can sample any number of (independent) transitions from any given state, or reset the system to any initial state. For example, \emph{Parallel Sampling} in~\cite{Kearns1999PQL}.
	We do not assume such capabilities, but instead deal with the realistic, challenging 
	setting with a single path.
	\item \textbf{Online computation:} We assume that data arrives sequentially rather than all at once. Estimates are updated in an online fashion upon observing each new sample. Moreover, as in standard Q-learning, our approach does not store old data. In particular, our approach differs from other batch methods, which need to wait for all data to be received before starting computation, and require multiple passes over the data. Therefore, our approach is space efficient, and hence can handle the data-rich scenario with a large, increasing number of samples.
	\item \textbf{Non-asymptotic, near optimal guarantees:} We characterize the finite-sample convergence rate of our 
	algorithm; that is, how many samples are needed to achieve a given accuracy for estimating the optimal value function. Our analysis is
	\emph{nearly} tight in that we establish a lower bound that nearly matches our generic upper bound specialized to setting when data is generated 
	per random policy or more generally any policy with random exploration component to it.
	\end{itemize}
While there is a large and growing literature on Reinforcement Learning for MDPs, to the best of our knowledge, ours
is the first result on Q-learning that simultaneously has all of the above four features. We summarize comparison with relevant prior
works in Table \ref{table:related-work}. Detailed discussion can be found in Appendix \ref{sec:related}. 

\begin{table}[t]
	\caption{Summary of relevant work. See Appendix \ref{sec:related} for details. }
	\label{table:related-work}
	\centering
	\begin{adjustbox}{width=1\textwidth,center=\textwidth}
		\begin{tabular}{p{52mm}cccccc}
			\toprule
			Specific work & Method & Continuous  & Unknown & Single &  Online & Non-asymptotic\\
			&   & state space &  transition Kernel
			& sample path & update &  guarantees \\
			\midrule
			
			\cite{Chow1989Complexity}, \cite{Rust1997}, \cite{Saldi2017} & Finite-state approximation & Yes & No & No & Yes & Yes\\
			\arrayrulecolor{lightgray}\midrule
				
			\cite{Tsitsiklis1994Qlearning}, \cite{Jaakkola1994Qlearning},
			\cite{Szepesy1997Qlearning} & Q-learning & No & Yes & Yes & Yes & No\\
			\midrule
			
			\cite{Hasselt2010DoubleQL}, \cite{Azar2011SQL},
			\cite{Even-Dar2004Qlearning} & Q-learning & No & Yes & Yes & Yes & Yes\\
			\midrule
			
			\cite{Kearns1999PQL} & Q-learning & No & Yes & No & Yes & Yes\\
			\midrule

			\cite{szepesvari2004interpolation},\cite{melo2007qlearning} & Q-learning & Yes & Yes & Yes & Yes & No\\
			\midrule
						
			\cite{Ormoneit2002Discounted}, \cite{Ormoneit2002Average}  & Kernel-based approximation & Yes & Yes & No & No & No\\			
			\midrule
			
			\cite{Haskell2016EDP} & Value/Policy iteration & No & Yes & No & No & Yes\\
			\midrule
			
			\cite{Tsitsiklis1997Linear} & Parameterized TD-learning & No & Yes & Yes & Yes & No\\
			\midrule

			\cite{dalal2017TD} & Parameterized TD-learning & No & Yes & No & Yes & Yes\\
			\midrule

			\cite{bhandari2018TD} & Parameterized TD-learning & No & Yes & Yes & Yes & Yes\\
			\midrule
			
			\cite{Bhat2012ADP_kernel} & Non-parametric LP & No & Yes & No & No & Yes\\
			\midrule
			
			\cite{Munos2008finite} & Fitted value iteration &  Yes & Yes & No & No & Yes \\
			\midrule
			
			\cite{Antos2008single} & Fitted policy iteration & Yes & Yes & Yes & No & Yes \\
			\midrule
			
 			Our work & Q-learning & Yes & Yes & Yes & Yes & Yes\\
			\arrayrulecolor{black}\bottomrule& 
		\end{tabular}
	\end{adjustbox}
\vspace{-.5cm}
\end{table}

\vspace{-.1in}
\section{Setup}
\label{sec:prelim}
\vspace{-.1in}

In this section, we introduce necessary notations, definitions for the framework of Markov Decision Processes that will be 
used throughout the paper. We also precisely define the question of interest. 

\medskip
\noindent{\bf Notation.} For a metric space $E$ endowed with metric $\rho$, we denote by $C(E)$ the set of all bounded and
measurable functions on $E$. For each $f\in C(E),$ let $\left\Vert f\right\Vert _{\infty}:=\sup_{x\in E}$$\left|f(x)\right|$
be the supremum norm, which turns $C(E)$ into a Banach space $B$.
Let $\text{Lip}(E,M)$ denote the set of Lipschitz continuous functions
on $E$ with Lipschitz bound $M$, i.e., 
\[
\text{Lip}(E,M)=\left\{ f\in C(E)\;|\;\left|f(x)-f(y)\right|\leq M\rho(x,y),\;\forall x,y\in E\right\}.
\]
The indicator function is denoted by $\indic\{\cdot\}$.
For each integer $ k \ge 0 $, let $ [k] \triangleq \{ 1,2, \ldots, k\}. $ 

\medskip
\noindent{\bf Markov Decision Process.} We consider a general  setting where an agent
interacts with a stochastic environment. This interaction is modeled
as a discrete-time discounted Markov decision process (MDP). An MDP
is described by a five-tuple $(\mX,\mA,p,r,\gamma)$, where $\mX$
and $\mA$ are the state space and action space, respectively. We shall
utilize $t \in \mathbb{N}$ to denote time. Let $x_t \in \mX$ be state
at time $t$. At time $t$, the action chosen is denoted as $a_t \in \mA$. 
Then the state evolution is Markovian as per some transition probability kernel 
with density $p$ (with respect to the Lebesgue measure $ \lambda $ on $ \mX $). That is, 
\begin{align}\label{eq:transition}
\Pr (x_{t+1} \in B | x_t = x, a_t = a) & = \int_B p(y | x, a) \lambda({d} y)
\end{align}
for any measurable set $ B \in \mX $.
The one-stage reward  earned at time $t$ is a random variable $R_t$ with expectation
$ \E [ R_t | x_t = x, a_t =a ] = r(x,a)$, where $r:\mX\times\mA\rightarrow\real$ is the 
expected reward function. Finally, $\gamma\in(0,1)$ is the discount 
factor and the overall reward of interest is 
$
\sum_{t=0}^\infty \gamma^t R_t
$
The goal is to maximize the expected value of this reward. 
Here we consider a distance function $\rho:\mX\times\mX\rightarrow\real_{+}$
so that $\left(\mX,\rho\right)$ forms a metric space. For the ease of exposition,
we use $\mZ$ for the joint state-action space $\mX\times\mA.$ 

We start with the following standard assumptions on the MDP:
\begin{assumption}[MDP Regularity]
\label{assu:MDP-Reularity} We assume that: (A1.) The continuous state space $\mX$ is a compact subset of $\real^{d}$; (A2.) $\mA$ is a finite set of cardinality $\left|\mA\right|$; (A3.) The one-stage reward $R_t$ is non-negative and uniformly bounded by $R_{\max}$, i.e., $0 \leq R_t \leq R_{\max}$ almost surely. For each $a\in\mA,$
	$r(\cdot,a)\in\textup{Lip}(\mX,M_{r})$ for some $M_{r}>0$. 
(A4.) The transition probability kernel $p$ satisfies 
	\[
	\left|p(y|x,a)-p(y|x',a)\right|\leq W_{p}(y)\norm x{x'},\qquad\forall a\in\mA,\forall x,x',y\in \mX,
	\]
	where the function $W_{p}(\cdot)$ satisfies 
	$
	\int_{\mX}W_{p}(y)\lambda(dy)\leq M_{p}.
	$

\end{assumption}

The first two assumptions state that the state space is compact and the action space is finite. The third and forth stipulate that the reward and transition kernel are Lipschitz continuous (as a function of the current state). Our Lipschitz assumptions are identical to (or less restricted than) those used in the work of~\cite{Rust1997}, \cite{Chow1991TACmultigrid}, and~\cite{Dufour2015StochasticsMDP}. In general, this type of Lipschitz continuity assumptions are standard in the literature on MDPs with continuous state spaces; see, e.g., the work of ~\cite{Dufour2012AnalAppl,Dufour2013LP}, and~\cite{Bertsekas1975TACdiscretization}. 

A Markov policy $\pi(\cdot|x)$ gives the probability of performing
action $a\in\mA$ given the current state $x$. A deterministic policy
assigns each state a unique action. The \emph{value} function for
each state $x$ under policy $\pi$, denoted by $V^{\pi}(x),$ is
defined as the expected discounted sum of rewards received following
the policy $\pi$ from initial state $x,$ i.e., $V^{\pi}(x)=\E_{\pi}\left[\sum_{t=0}^{\infty} \gamma^{t} R_t | x_0 = x \right]$.
The \emph{action-value function} $ Q^\pi $ under policy $\pi$ is defined
by $Q^{\pi}(x,a)=r(x,a)+\gamma\int_{y}p(y|x,a)V^{\pi}(y)\lambda(dy)$.
The number $Q^{\pi}(x,a)$ is called the \emph{Q-value} of the pair $(x,a)$, which is the return of initially performing action $a$ at state
$s$ and then following policy $\pi.$   Define 
$$\beta \triangleq 1/(1-\gamma) 
\qquad\text{and}
\qquad V_{\max} \triangleq \beta R_{\max}.		
$$ 
Since all the rewards are bounded by $R_{\max}$, it is easy to see that the value function of every policy is bounded by $V_{\max}$~\cite{Even-Dar2004Qlearning,Stehl2006PAC}.
The goal is to find an optimal policy $\pi^{*}$ that maximizes the
value from any start state. The optimal value function $V^{*}$is
defined as $V^{*}(x)=V^{\pi^{*}}(x)=\sup_{\pi}V^{\pi}(x)$, $\forall x\in\mX.$
The optimal action-value function is defined as $Q^{*}(x,a)=Q^{\pi^{*}}(x,a)=\sup_{\pi}Q^{\pi}(x,a)$.
The Bellman optimality operator $F$ is defined as
\begin{align*}
(FQ)(x,a) & =r(x,a)+\gamma\E\left[\max_{b\in\mA}Q(x',b)\;|\:x,a\right]=r(x,a)+\gamma\int_{\mX}p(y|x,a)\max_{b\in\mA}Q(y,b)\lambda(dy).
\end{align*}
It is well known that $F$ is a contraction with factor $\gamma$
on the Banach space $C(\mZ)$~\cite[Chap.\ 1]{Bertsekas1995DPOC}. 
The optimal action-value function $Q^{*}$ is the unique solution
of the Bellman's equation $Q=FQ$ in $C(\mX\times \mA).$  
In fact, under our setting, it can be show that $ Q^* $ is bounded and Lipschitz. This is stated below and established in
Appendix ~\ref{sec:proof_Q_Lip}.
\begin{lem}
	\label{lem:Q_Lip}
	Under Assumption~\ref{assu:MDP-Reularity}, the function $Q^{*}$ satisfies that
	$\left\Vert Q^{*}\right\Vert _{\infty}\le V_{\max}$ and that 
	 $Q^{*}(\cdot,a)\in\textup{Lip}(\mX,M_{r}+\gamma V_{\max}M_{p})$ for each $a\in\mA$.
\end{lem}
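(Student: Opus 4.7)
The plan is to exploit the fixed-point identity $Q^* = FQ^*$ and then read off both claims directly from the structural assumptions on $r$ and $p$. First I would prove the supremum bound, and then use the resulting bound on $V^*$ to bound the Lipschitz constant of $Q^*(\cdot, a)$.

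For the boundedness claim, observe that since $0 \le R_t \le R_{\max}$ almost surely, for \emph{any} policy $\pi$ and any starting state $x$ we have
\[
0 \;\le\; V^\pi(x) \;=\; \E_\pi\Bigl[\sum_{t=0}^\infty \gamma^t R_t \,\big|\, x_0 = x\Bigr] \;\le\; \sum_{t=0}^\infty \gamma^t R_{\max} \;=\; \frac{R_{\max}}{1-\gamma} \;=\; V_{\max}.
\]
Taking the supremum over $\pi$ yields $0 \le V^*(x) \le V_{\max}$ for all $x$. Plugging this into the Bellman equation $Q^*(x,a) = r(x,a) + \gamma \int p(y|x,a) V^*(y) \lambda(dy)$ and using $0 \le r \le R_{\max}$ gives $0 \le Q^*(x,a) \le R_{\max} + \gamma V_{\max} = V_{\max}$, establishing $\|Q^*\|_\infty \le V_{\max}$.

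For the Lipschitz claim, I would subtract the Bellman equation at $x$ and at $x'$ (for the same action $a$) to obtain
\[
Q^*(x,a) - Q^*(x',a) \;=\; \bigl[r(x,a) - r(x',a)\bigr] \;+\; \gamma \int_{\mX} \bigl[p(y|x,a) - p(y|x',a)\bigr]\, V^*(y)\, \lambda(dy).
\]
The first term is bounded in absolute value by $M_r \norm{x}{x'}$ by assumption (A3). For the second term, I would use the triangle inequality inside the integral, together with the pointwise bound $0 \le V^*(y) \le V_{\max}$ proved above and assumption (A4):
\[
\Bigl|\int \bigl[p(y|x,a) - p(y|x',a)\bigr]\, V^*(y)\, \lambda(dy)\Bigr| \;\le\; V_{\max} \int W_p(y) \lambda(dy) \cdot \norm{x}{x'} \;\le\; V_{\max} M_p \norm{x}{x'}.
\]
Combining the two bounds gives $|Q^*(x,a) - Q^*(x',a)| \le (M_r + \gamma V_{\max} M_p) \norm{x}{x'}$, which is the desired Lipschitz property.

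There is no real obstacle in this argument; the only subtlety is that the Lipschitz bound uses the boundedness of $V^*$, so the two parts of the lemma must be proved in order. Everything else is a direct application of the fixed-point equation $Q^* = FQ^*$ (which holds in $C(\mZ)$ because $F$ is a $\gamma$-contraction, as stated in the excerpt) together with assumptions (A3) and (A4).
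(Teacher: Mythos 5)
Your proof is correct, and the two key estimates --- $|r(x,a)-r(x',a)|\le M_r\rho(x,x')$ from (A3), and $\bigl|\int (p(y|x,a)-p(y|x',a))V^*(y)\lambda(dy)\bigr|\le V_{\max}M_p\rho(x,x')$ from (A4) together with $\|V^*\|_\infty\le V_{\max}$ --- are exactly the ones the paper uses. The difference is purely in the logical packaging. You verify the bounds \emph{a posteriori}: you take as given that $Q^*$ satisfies the Bellman equation $Q^*(x,a)=r(x,a)+\gamma\int p(y|x,a)V^*(y)\lambda(dy)$ and that $V^*=\max_b Q^*(\cdot,b)$ is bounded by $V_{\max}$ (which you establish policy-by-policy), and then read off both conclusions by subtracting the equation at $x$ and $x'$. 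The paper instead shows that $F$ maps the set $\mathcal{D}$ of functions bounded by $V_{\max}$ into $\mathcal{D}\cap\mathcal{L}$, where $\mathcal{L}$ is the set of functions that are $(M_r+\gamma V_{\max}M_p)$-Lipschitz in the state, and then invokes the Banach fixed point theorem on the closed invariant set $\mathcal{D}\cap\mathcal{L}$ to conclude $Q^*\in\mathcal{D}\cap\mathcal{L}$. The paper's route is self-contained in that it simultaneously re-establishes existence and uniqueness of $Q^*$ and locates it in the desired class without appealing to the identity $V^*=\sup_\pi V^\pi$; your route is slightly more elementary and avoids the invariant-set machinery, at the cost of leaning on the standard facts that $Q^*$ satisfies the Bellman equation and that $\max_b Q^*(\cdot,b)=V^*$ (both of which the paper does state in its setup, so there is no gap). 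As you note, the ordering matters in your version --- the boundedness of $V^*$ must come first --- whereas in the paper's version the same dependence appears as the hypothesis $u\in\mathcal{D}$ in the Lipschitz estimate for $Fu$.
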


\section{Reinforcement Learning Using Nearest Neighbors}
\label{sec:algorithm}

In this section, we present the nearest-neighbor-based reinforcement learning algorithm. 
The algorithm is based on constructing a finite-state discretization of the original MDP, 
and combining Q-learning with nearest neighbor regression to estimate the $ Q $-values over 
the discretized state space, which is then interpolated and extended to the original continuous
state space.  In what follows, we shall first describe several building blocks for the algorithm in
Sections~\ref{sec:discretization}--\ref{sec:cover_time}, and then summarize the algorithm in Section~\ref{sec:NNQL}.   

\vspace{-.1in}
\subsection{State Space Discretization}
\label{sec:discretization}
\vspace{-.05in}

Let $h>0$ be a pre-specified scalar parameter. Since the state space $\mX$ is compact, one can find a finite set
$\mX_h \triangleq \{c_{i}\}_{i=1}^{N_{h}}$ of points in $\mX$ such that 
\[
\min_{i\in[N_{h}]}\rho(x,c_{i})<h,\;\forall x\in\mX.
\]
The finite grid $\mX_h $ is called an $h$-net
of $\mX,$ and its cardinality $n \equiv N_{h}$ can be chosen to be the $h$-covering number
of the metric space $(\mX,\rho).$ Define  $\mZ_{h} = \mX_{h}\times\mA$.
Throughout this paper, we denote
by $\ball_{i}$ the ball centered at $c_{i}$ with radius~$h$; that is, 
$\ball_{i} \triangleq \left\{ x\in\mX:\norm{x}{c_{i}}\leq h\right\}.$

\vspace{-.1in}
\subsection{Nearest Neighbor Regression}
\label{sec:NNR}
\vspace{-.05in}

Suppose that we are given estimated Q-values for the
finite subset of states $\mX_{h}=\{c_{i}\}_{i=1}^{n}$, denoted by
$q=\left\{ q(c_{i},a),c_{i}\in\mX_{h},a\in\mA\right\} $. For each
state-action pair $(x,a)\in\mX\times\mA,$ we can predict its Q-value
via a regression method. We focus on nonparametric regression operators
that can be written as nearest neighbors averaging in terms of the
data $q$ of the form 
\begin{equation}
(\Gamma_{\text{NN}}q)(x,a)= \textstyle{\sum_{i=1}^{n}} K(x,c_{i})q(c_{i},a),\qquad\forall x\in\mX,a\in\mA, \label{eq:NN_average}
\end{equation}
where $K(x,c_{i})\geq0$ is a weighting kernel function satisfying
$
\sum_{i=1}^{n}K(x,c_{i})=1, \forall x\in \mX.
$
Equation~\eqref{eq:NN_average} defines the so-called Nearest Neighbor (NN) operator $ \Gamma_{\text{NN}} $, which maps the space $ C(\mX_{h}\times\mA) $ into the set of all bounded function over $ \mX \times \mA $. 
Intuitively, in (\ref{eq:NN_average}) one assesses the Q-value of $(x,a)$ by looking at the training data where 
the action $a$ has been applied, and by averaging their values. It can be easily checked that the operator $\Gamma_{\text{NN}}$ is non-expansive in the following sense:
\begin{align}
\label{eq:nn-contract}
\left\Vert \Gamma_{\text{NN}}q-\Gamma_{\text{NN}}q'\right\Vert _{\infty} & \leq\left\Vert q-q'\right\Vert _{\infty},\qquad\forall q,q'\in C(\mX_{h}\times\mA).
\end{align}
This property will be crucially used for establishing our results. $ K $ is assumed to satisfy 
\begin{equation}
\label{eq:weight_assumption}
K(x,y)=0~~\text{if}~~\rho(x,y)\geq h, \qquad \forall x\in \mX, y \in \mX_{h},
\end{equation}
where $ h $ is the discretization parameter defined in Section~\ref{sec:discretization}.\footnote{This assumption is not absolutely necessary, but is imposed to simplify subsequent analysis. In general, our results hold as long as $ K(x,y) $ decays sufficiently fast with the distance $ \rho(x,y) $. }
This means that the values of states located in the neighborhood of $x$ are more influential in the averaging procedure~\eqref{eq:NN_average}. There are many possible choices for $ K $. In Section~\ref{sec:NN_examples} we describe three representative choices that correspond to $k$-Nearest Neighbor Regression, 
Fixed-Radius Near Neighbor Regression and Kernel Regression.

\vspace{-.1in}
\subsection{A Joint Bellman-NN Operator}
\label{sec:joint_operator}
\vspace{-.05in}

Now, we define the \emph{joint Bellman-NN (Nearest Neighbor) operator}. As will become clear subsequently, it is this operator that 
the algorithm aims to approximate, and hence it plays a crucial role in the subsequent analysis.  

For a function $q:\mZ_{h}\rightarrow\real,$ we denote by $\tilde{Q} \triangleq (\Gamma_{\text{NN}}q)$
the nearest-neighbor average extension of $q$ to $\mZ$; that is,
\[
\tilde{Q}(x,a)=(\Gamma_{\text{NN}}q)(x,a), \quad \forall(x,a)\in\mZ.
\]
The joint Bellman-NN operator $G$ on $ \real^{|\mZ_h|} $ is defined by composing
the original Bellman operator $F$ with the NN operator $ \Gamma_{\text{NN}} $ and then restricting to $\mZ_{h}$; that is, for each $(c_{i},a)\in\mZ_{h}$,
\begin{align}
(Gq)(c_{i},a) \triangleq (F\Gamma_{\text{NN}}q)(c_{i},a) &= (F\tilde{Q})(c_{i},a) 
 =r(c_{i},a)+\gamma\E\left[\max_{b\in\mA}(\Gamma_{\text{NN}}q)(x',b)\;|\;c_{i},a\right]. \label{eq:Bellman-NN-operator} 
\end{align}
It can be shown that $G$ is a contraction operator with modulus $\gamma$
mapping $ \real^{|\mZ_h|} $ to itself, thus admitting a unique fixed point,
denoted by $q_{h}^{*}$; see Appendix~\ref{subsec:properties_NNQL}.

\vspace{-.1in}
\subsection{Covering Time of Discretized MDP}
\label{sec:cover_time}
\vspace{-.05in}

As detailed in Section~\ref{sec:NNQL} to follow, our algorithm uses data generated by an abritrary policy $ \pi $ for the purpose of learning.
The goal of our approach is to estimate the Q-values of \emph{every} state. For there
to be any hope to learn something about the value of a given state,
this state (or its neighbors) must be visited at least once. Therefore, 
to study the convergence rate of the algorithm, we need a way to quantify how often $ \pi $ samples from different regions of the state-action space $ \mZ = \mX \times \mA $.

Following the approach taken by~\cite{Even-Dar2004Qlearning} and~\cite{Azar2011SQL}, 
we introduce the notion of the \emph{covering time} of MDP under a policy $ \pi $.
This notion is particularly suitable for our setting as our algorithm is based on \emph{asynchronous} Q-learning (that is, we are given a single, sequential trajectory of the MDP, where at each time step one state-action pair is observed and updated), and the policy $ \pi $ may be non-stationary.  
In our continuous state space setting, the covering time is defined with respect to the discretized space $\mZ_{h}$, as follows:
\begin{defn}[Covering time of discretized MDP]
	For each $ 1\le i \le n=  N_h $ and $ a\in\mA $, a ball-action pair $(\ball_{i},a)$ is said to be visited at
	time $t$ if $x_t \in \ball_i$ and $a_t = a$.
	The discretized state-action space $ \mZ_{h} $ is covered by the policy $\pi$
	if all the ball-action pairs are visited at least once under the policy $\pi$. 
	Define $\tau_{\pi,h}(x,t)$, the covering time of the MDP under the 
	policy $\pi$, as the minimum number of steps required to visit all
	ball-action pairs starting from state $x\in\mX$ at time-step
	$t\geq0.$ Formally, $ \tau_{\pi,h}(x,t)  $ is defined as
	\begin{align*}
		\min\Big\{s \geq 0: x_t \!=\! x, ~\forall i \!\leq\! N_h, a \!\in \! \mA, ~\exists  t_{i, a} \!\in\! [t, t\!+\!s], \mbox{~such~that~} x_{t_{i, a}} \!\in\! B_i 
		\mbox{ and }  a_{t_{i, a}} \!=\! a, \mbox{~under~} \pi\Big\}, 
	\end{align*}
with notation that minimum over empty set is $\infty$. 
\end{defn}	
	
We shall assume that there exists a policy $\pi$ with bounded expected cover
time, which guarantees that, asymptotically, all the ball-action pairs are visited infinitely 
many times under the policy $\pi.$ 
\begin{assumption}
	\label{assu:cover-time}There exists an integer $L_{h} <\infty$
	such that
	$
	\E[\tau_{\pi,h}(x,t)]\leq L_{h},
	$
	$\forall x\in\mX$, $t>0.$ Here the expectation is defined with
	respect to randomness introduced by Markov kernel of MDP as well as the policy $\pi$. 
\end{assumption}

In general, the covering time can be large in the worst case. 
In fact, even with a finite state space, it is easy to find examples where the covering time is exponential
in the number of states for every
policy. For instance, consider
an MDP with states $1, 2,\ldots, N$, where at any state~$i$, the chain is
reset to state~1 with probability $1/2$ regardless of the action taken.
Then, every policy takes exponential time to reach state $N$
starting from state $ 1 $, leading to an exponential covering time.

To avoid the such bad cases, some additional assumptions are needed to 
ensure that the MDP is well-behaved. For such MDPs, there are a variety of 
polices that have a small covering time. Below we focus on a class of MDPs 
satisfying a form of the uniform ergodic assumptions, and show that the 
standard $ \varepsilon $-greedy policy (which includes the purely random 
policy as special case by setting $ \varepsilon =1 $) has a small covering time. 
This is done in the following two Propositions. Proofs can be found in 
Appendix \ref{sec:proof_cover_time}.

\begin{prop} \label{prop:cover_time}
	Suppose that the MDP satisfies the following: there exists a probability measure $ \nu $ on $ \mX $, a number $ \varphi >0 $ and an integer $ m \ge 1$ such that for all $ x \in \mX ,$ all $t\geq 0$ and all policies $ \mu $,
	\begin{align}\label{eq:unif.erg}
		\textstyle{\Pr_\mu} \left( x_{m+t} \in \cdot \right \vert x_t = x) & \ge \varphi \nu (\cdot).
	\end{align} 
	Let $ \nu_{\min} \triangleq \min_{i\in[n]} \nu(\ball_i) $, where we recall that $ n \equiv N_{h} = |\mX_{h} |$ is the cardinality of the discretized state space.
	Then the expected covering time of $ \varepsilon $-greedy is upper bounded by $L_h = O\Big( \frac{m|\mA|}{\varepsilon \varphi \nu_{\min}} \log (n|\mA|)\Big)$.
\end{prop}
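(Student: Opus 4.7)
The plan is to combine the uniform minorization hypothesis~\eqref{eq:unif.erg} with the exploration floor $\varepsilon/|\mA|$ of the $\varepsilon$-greedy rule, and then invoke a coupon-collector argument over the $n|\mA|$ ball-action pairs. The key idea is that at the end of every $m$-step block, the state distribution dominates $\varphi\,\nu$ irrespective of what policy is played inside the block, and the $\varepsilon$-greedy rule then selects any prescribed action with probability at least $\varepsilon/|\mA|$; hence each pair is ``hit'' at the end of each block with probability at least $p \triangleq \varepsilon\,\varphi\,\nu_{\min}/|\mA|$.

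First, partition time into consecutive blocks of length $m$ starting at the initial time $t$. For any fixed pair $(\ball_i,a)$ and any block endpoint $s$, writing $\mathcal{F}_{s}$ for the history up to time $s$, the minorization condition (which is stated for \emph{any} policy, hence applies to the non-stationary $\varepsilon$-greedy $\pi$) gives $\Pr(x_{s+m}\in\ball_i \mid \mathcal{F}_s) \ge \varphi\,\nu(\ball_i) \ge \varphi\,\nu_{\min}$, while the $\varepsilon$-greedy rule guarantees $\Pr(a_{s+m}=a \mid \mathcal{F}_{s+m}) \ge \varepsilon/|\mA|$ regardless of the observed state. The tower rule then yields the per-block lower bound
\[
\Pr\bigl(x_{s+m}\in\ball_i,\; a_{s+m}=a \,\big|\, \mathcal{F}_s\bigr) \;\ge\; p.
\]
Iterating across successive blocks and taking a union bound over all $n|\mA|$ pairs yields, for every $K\ge 1$,
\[
\Pr\bigl(\tau_{\pi,h}(x,t) > Km\bigr) \;\le\; n|\mA| \,(1-p)^{K} \;\le\; n|\mA|\, e^{-pK}.
\]

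Converting this tail bound into an expectation is routine: setting $K^* \triangleq \lceil p^{-1}\log(n|\mA|) \rceil$ and summing,
\[
\E[\tau_{\pi,h}(x,t)] \;\le\; m \sum_{K\ge 0} \Pr\bigl(\tau_{\pi,h}(x,t) > Km\bigr) \;\le\; m\,(K^* + p^{-1}) \;=\; O\!\left(\frac{m|\mA|}{\varepsilon\,\varphi\,\nu_{\min}}\,\log(n|\mA|)\right),
\]
matching the claimed bound uniformly in $x$ and $t$. The only real delicacy, and where I would be careful, lies in the per-block step: the $\varepsilon$-greedy policy is history-dependent (greedy with respect to the current $Q$ iterate), but the two inputs used are both robust to this dependence---the minorization~\eqref{eq:unif.erg} is stated uniformly over all policies, and the $\varepsilon/|\mA|$ selection floor holds for every state and history. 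Given this, the remaining coupon-collector and tail-integration steps are standard.
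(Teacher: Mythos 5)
Your proof is correct and follows essentially the same route as the paper's: both combine the minorization condition with the $\varepsilon/|\mA|$ action floor to get a per-block hitting probability of $\varepsilon\varphi\nu_{\min}/|\mA|$ for each ball-action pair, reduce to the case $m=1$ by sampling the chain at multiples of $m$, and finish with a coupon-collector argument over the $n|\mA|$ pairs. The only (immaterial) difference is the final bookkeeping: the paper sums geometric waiting times for each newly collected pair (a harmonic series giving the $\log(n|\mA|)$ factor), whereas you take a union bound on the per-pair tail and integrate, which yields the same bound.
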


\begin{prop} \label{prop:cover_time_single}
	Suppose that the MDP satisfies the following: there exists a probability measure $ \nu $ on $ \mX $, a number $ \varphi >0 $ and an integer $ m \ge 1$ such that for all $ x \in \mX ,$ all $t\geq 0$, there exists a sequence of actions $\boldsymbol{\hat{a}}(x)=(\hat{a}_1,\ldots,\hat{a}_m)\in\mA^m$,
	\begin{align}\label{eq:unif.erg.single}
		\textstyle{\Pr} \left( x_{m+t} \in \cdot \right \vert x_t = x, a_t=\hat{a}_1,\ldots,a_{t+m-1}=\hat{a}_m) & \ge \varphi \nu (\cdot).
	\end{align} 
	Let $ \nu_{\min} \triangleq \min_{i\in[n]} \nu(\ball_i) $, where we recall that $ n \equiv N_{h} = |\mX_{h} |$ is the cardinality of the discretized state space.
	Then the expected covering time of $ \varepsilon $-greedy is upper bounded by $L_h = O\Big( \frac{m|\mA|^{m+1}}{\varepsilon^{m+1} \varphi \nu_{\min}} \log (n|\mA|)\Big)$.
\end{prop}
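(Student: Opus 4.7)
The plan is to mimic the argument used for Proposition~\ref{prop:cover_time}, with one extra factor that pays for forcing the policy to execute the specific action sequence demanded by~\eqref{eq:unif.erg.single}. I would partition the time axis into consecutive blocks of length $m+1$. A key property of the $\varepsilon$-greedy policy is that, in every time step and regardless of the current state, any prescribed action is chosen with probability at least $\varepsilon/|\mA|$ (this is what the uniform exploration component buys us). Consequently, conditional on the state $x$ at the start of a given block, the probability that the first $m$ actions in the block match exactly the sequence $\boldsymbol{\hat a}(x)=(\hat a_1,\ldots,\hat a_m)$ is at least $(\varepsilon/|\mA|)^m$.

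On the event that the required sequence is played, the minorization~\eqref{eq:unif.erg.single} implies that the state at the end of the $m$-th step of the block lies in any fixed ball $\ball_i$ with probability at least $\varphi \nu(\ball_i) \ge \varphi \nu_{\min}$. Then, regardless of that state, the policy selects any fixed action $a$ at the $(m+1)$-th step with probability at least $\varepsilon/|\mA|$. Multiplying these three independent pieces, for each ball-action pair $(\ball_i,a)$ the probability that the pair is visited during a given block is at least
\[
p \;\triangleq\; \left(\frac{\varepsilon}{|\mA|}\right)^{\!m}\!\cdot \varphi \nu_{\min} \cdot \frac{\varepsilon}{|\mA|} \;=\; \frac{\varepsilon^{m+1}}{|\mA|^{m+1}}\,\varphi\, \nu_{\min}.
\]
Crucially this lower bound holds conditional on the entire history up to the start of the block, so the number of blocks needed to visit any given $(\ball_i,a)$ is stochastically dominated by a geometric random variable with parameter $p$.

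A standard coupon-collector bound (union bound plus geometric tails) then shows that the expected number of blocks needed to cover \emph{all} $n|\mA|$ ball-action pairs is $O\!\left(\frac{1}{p}\log(n|\mA|)\right) = O\!\left(\frac{|\mA|^{m+1}}{\varepsilon^{m+1}\varphi\,\nu_{\min}}\log(n|\mA|)\right)$. Multiplying by the block length $m+1=\Theta(m)$ yields the claimed bound on $L_h$. The only delicate point, and the main obstacle, is verifying that the single-block visit probability $p$ lower bound is valid conditional on arbitrary past history, so that blocks can be treated as independent Bernoulli trials for the coupon-collector reduction; this relies on (i) the pointwise $\ge \varepsilon/|\mA|$ action lower bound of $\varepsilon$-greedy, and (ii) the Markov-style conditioning in~\eqref{eq:unif.erg.single}. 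Both are available by construction, so the argument goes through, and the sole quantitative difference from Proposition~\ref{prop:cover_time} is the additional $(\varepsilon/|\mA|)^m$ factor.
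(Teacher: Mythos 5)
Your proposal is correct and follows essentially the same route as the paper: pay $(\varepsilon/|\mA|)^m$ for the $\varepsilon$-greedy policy to execute the prescribed action sequence, combine with the minorization to land in any ball with probability $\ge \varphi\nu_{\min}(\varepsilon/|\mA|)^m$, pay one more factor of $\varepsilon/|\mA|$ for the target action, and finish with a coupon-collector argument over blocks of length $\Theta(m)$. The only cosmetic differences are that the paper subsamples the chain at times $m, 2m, \ldots$ and bounds the expected cover time via the staged sum $\sum_k D_k$ of geometrics with success probability $(M-k+1)p$, whereas you use non-overlapping blocks of length $m+1$ and a union bound over geometric tails; both yield the same $O(p^{-1}\log(n|\mA|))$ block count.
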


\vspace{-.1in}
\subsection{Q-learning using Nearest Neighbor}
\label{sec:NNQL}
\vspace{-.05in}

We describe the nearest-neighbor Q-learning (NNQL) policy. 
Like Q-learning, it is a model-free policy for solving
MDP. Unlike standard Q-learning, it is (relatively) efficient to implement as 
it does not require learning the Q function over entire space $\mX \times \mA$. 
Instead, we utilize the nearest neighbor regressed Q function 
using the learned Q values restricted to $\mZ_h$. The policy assumes
access to an existing policy $\pi$ (which is sometimes called the ``exploration policy'', and need not have any optimality properties) 
that is used to sample data points for learning. 

The pseudo-code of NNQL is described in Policy \ref{alg:-Q-learning}.
At each time step $t$, action $a_{t}$ is performed from state $Y_{t}$ as per
the given (potentially non-optimal) policy $\pi$, and 
the next state $Y_{t+1}$ is generated according to $p(\cdot|Y_{t},a_{t}).$ 
Note that the sequence of observed states $(Y_{t})$ take continuous values in the 
state space $\mX$. 

\begin{algorithm}[h]
	\caption{\label{alg:-Q-learning}Nearest-Neighbor Q-learning}
	
	\textbf{Input}: Exploration policy $\pi$, discount factor
	$\gamma$, number of steps $T$, bandwidth parameter $ h $, and initial state~$Y_{0}$.
	
	Construct discretized state space $ \mX_{h} $;  initialize 
	$t = k = 0, ~\alpha_{0}=1,~q^0 \equiv 0;$

	\textbf{Foreach $(c_{i},a)\in\mZ_{h}$,  set} $N_{0}(c_{i},a)=0;$ \textbf{end}
	
	\textbf{repeat}
	
	$\qquad$Draw action $a_{t}\sim\pi(\cdot|Y_{t})$ and observe reward $ R_t $; generate the next state $Y_{t+1} \sim p(\cdot|Y_{t},a_{t});$
	
	$\qquad$\textbf{Foreach $i$ such that $Y_{t}\in\ball_{i}$ do}
	
	$\qquad\qquad$$\eta_{N}=\frac{1}{N_{k}(c_{i},a_{t})+1};$
	
	$\qquad\qquad$\textbf{if }$N_{k}(c_{i},a_{t})>0$ \textbf{then }
	
	$\qquad$$\qquad\quad$$(G^{k}q^{k})(c_{i},a_{t})=(1-\eta_{N})(G^{k}q^{k})(c_{i},a_{t})+\eta_{N}\left(R_t+\gamma\max_{b\in\mA}(\Gamma_{\text{NN}}q^{k})(Y_{t+1},b)\right)$;
	
	$\qquad\qquad$\textbf{else} $(G^{k}q^{k})(c_{i},a_{t}) = R_t +\gamma\max_{b\in\mA}(\Gamma_{\text{NN}}q^{k})(Y_{t+1},b)$;
	
	$\qquad\qquad$\textbf{end}

	$\qquad\qquad$ $N_{k}(c_{i},a_{t}) = N_{k}(c_{i},a_{t}) + 1$

	\textbf{$\qquad$end}
	
	$\qquad$\textbf{if $\min_{(c_{i},a)\in\mZ_{h}}N_{k}(c_{i},a)>0$
		then }
	
	$\qquad\qquad$\textbf{Foreach $(c_{i},a)\in\mZ_{h}$ do}
	
	$\qquad\qquad$$\qquad$$q^{k+1}(c_{i},a) =(1-\alpha_{k})q^{k}(c_{i},a)+\alpha_{k}(G^{k}q^{k})(c_{i},a)$;
	
	$\qquad\qquad$\textbf{end}
	
	$\qquad\qquad$$k =k+1; \alpha_{k}=\frac{\beta}{\beta+k}$;

	$\qquad\qquad$\textbf{Foreach $(c_{i},a)\in\mZ_{h}$ do $N_{k}(c_{i},a)=0;$}
	\textbf{end}
	
	$\qquad$\textbf{end}
	
	$\qquad t =t+1;$
	
	\textbf{until} $t\geq T$;
	
	return $\hat{q} = q^{k}$
	
\end{algorithm}

The policy runs over \emph{iteration} with each iteration lasting for a number
of time steps. Let $k$ denote iteration count, $T_k$ denote time when iteration $k$
starts for $k \in \mathbb{N}$. Initially, $k = 0$, $T_0 = 0$, and for $t \in [T_k, T_{k+1})$, 
the policy is in iteration~$k$. The iteration is updated from $k$ to $k+1$ 
when starting with $t = T_k$, all ball-action $(\ball_i,a)$ pairs have been visited at least 
once.  That is, 
$
T_{k+1}  = T_k + \tau_{\pi, h}(Y_{T_k}, T_k).
$
In the policy description, the counter $N_k(c_i, a)$ records how many times the ball-action pair $ (\ball_i, a) $ has been visited from the beginning of iteration $ k $ till the current time $ t $; that is, $N_k(c_i, a) = \sum_{s=T_k}^{t} \indic \{Y_s \in \ball_{i}, a_s = a\}.$
By definition, the iteration $k$ ends at the first time step for which $\min_{(c_i, a)} N_k(c_i, a) > 0$.

During each iteration, the policy keeps track of the Q-function over the finite set $\mZ_{h}$. Specifically, let $q^k$ denote the approximate Q-values on $\mZ_{h}$ within iteration $k$.  The policy also maintains $ G^{k}q^k(c_i,a_t) $, which is a \emph{biased} empirical estimate of the joint
Bellman-NN operator $G$ applied to the estimates $ q^k $. At each time step $t \in [T_k, T_{k+1})$ within iteration~$k$, if the current state $ Y_t $ falls in the ball $ \ball_i $, then the corresponding value $(G^{k}q^{k})(c_{i},a_{t})$ is updated as
\begin{align}
(G^{k}q^{k})(c_{i},a_{t}) & =(1-\eta_{N})(G^{k}q^{k})(c_{i},a_{t})+\eta_{N} \Big( R_t +\gamma\max_{b\in\mA}(\Gamma_{\text{NN}}q^{k})(Y_{t+1},b)\Big), \label{eq:Gk}
\end{align}
where $\eta_N = \frac{1}{N_{k}(c_{i},a_{t})+1}$. We notice that the above update rule computes, in an incremental fashion, an estimate of the joint Bellman-NN operator $ G  $ applied to the current $ q^k $ for each discretized state-action pair $ (c_i, a) $, using observations $ Y_t $ that fall into the neighborhood $ \ball_i $ of $ c_i $. This nearest-neighbor approximation causes the estimate to be biased. 

At the end of iteration $k$, i.e., at time step $t = T_{k+1}-1$, a new $q^{k+1}$ is generated as follows: 
for each $(c_i, a) \in \mZ_h$,
\begin{align}
q^{k+1}(c_{i},a) & =(1-\alpha_{k})q^{k}(c_{i},a)+\alpha_{k}(G^{k}q^{k})(c_{i},a). \label{eq:NNQL}
\end{align}
At a high level, this update is similar to standard Q-learning updates --- the Q-values are updated by taking a weighted average of $ q^k $, the previous estimate, and $ G^k q^k $, an one-step application of the Bellman operator estimated using newly observed data. There are two main differences from standard Q-learning: 1) the Q-value of each $ (c_i, a) $ is estimated using all observations that lie \emph{in its neighborhood} --- a key ingredient of our approach; 2) we wait until all ball-action pairs are visited to update their Q-values, all at once.

Given the output $\hat{q}$ of Policy \ref{alg:-Q-learning}, we obtain an approximate Q-value for each (continuous) state-action pair $(x,a)\in\mZ$
via the nearest-neighbor average operation, i.e., $Q_{h}^{T}(x,a)=\left(\Gamma_{\text{NN}}\hat{q}\right)(x,a);$
here the superscript $ T $ emphasizes that the algorithm is run for $ T $ time steps with a sample size of~$ T $.

\section{Main Results}
\label{sec:main}

As a main result of this paper, we obtain finite-sample analysis of NNQL policy. Specifically, we find that the NNQL policy converges to 
an $\varepsilon$-accurate estimate of the optimal $Q^*$ with time $T$ that has polynomial dependence on the model parameters. The proof 
can be found in Appendix~\ref{sec:proof_main}.
\begin{thm}
	\label{thm:NN Q-learning} Suppose that Assumptions \ref{assu:MDP-Reularity} and \ref{assu:cover-time} hold. With notation $\beta = 1/(1-\gamma)$ and $ C = M_{r}+\gamma V_{\max}M_{p} $, 
	for a given $\varepsilon \in (0,4V_{\max}\beta)$, 
	define 
	$h^* \equiv h^{*}(\varepsilon)= \frac{\varepsilon}{4\beta C}.$
 Let $N_{h^{*}}$ be the $h^*$-covering number of the
	metric space $(\mX,\rho).$ For a universal constant $C_0>0$, after at most
	\[
	T=C_0 \frac{L_{h^{*}}V_{\max}^{3}\beta^{4}}{\varepsilon^{3}}\log\left(\frac{2}{\delta}\right)\log\left(\frac{N_{h^{*}}\left|\mA\right|V^2_{\max}\beta^{4}}{\delta\varepsilon^{2}}\right)
	\]
	steps, with probability at least $1-\delta$, we have $\left\Vert Q_{h^{*}}^{T}-Q^{*}\right\Vert _{\infty}\leq\varepsilon.$
\end{thm}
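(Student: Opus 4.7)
The plan is to bound $\|Q_{h^*}^T - Q^*\|_\infty$ via the triangle inequality
\[
\|Q_{h^*}^T - Q^*\|_\infty \;\le\; \underbrace{\|\Gamma_{\text{NN}}\hat q - \Gamma_{\text{NN}} q_{h^*}^*\|_\infty}_{\text{(A) estimation}} \;+\; \underbrace{\|\Gamma_{\text{NN}} q_{h^*}^* - Q^*\|_\infty}_{\text{(B) discretization}},
\]
and to show each piece is at most $\varepsilon/2$. Term (A) is at most $\|\hat q - q_{h^*}^*\|_\infty$ by the non-expansiveness bound \eqref{eq:nn-contract}; term (B) is a deterministic discretization bias.

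For (B), I would invoke Lemma~\ref{lem:Q_Lip} so that $Q^*(\cdot,a)$ is $C$-Lipschitz with $C = M_r + \gamma V_{\max} M_p$. Combined with the support condition \eqref{eq:weight_assumption} on $K$, this immediately yields $\|\Gamma_{\text{NN}} Q^*|_{\mZ_h} - Q^*\|_\infty \le Ch$. Using that $q_h^* = Gq_h^* = F\Gamma_{\text{NN}} q_h^*$ on $\mZ_h$, that $Q^* = FQ^*$, and that $F$ is a $\gamma$-contraction, I get
\[
\|\Gamma_{\text{NN}} q_h^* - Q^*\|_\infty \;\le\; Ch + \gamma\|\Gamma_{\text{NN}} q_h^* - Q^*\|_\infty \;\Longrightarrow\; \|\Gamma_{\text{NN}} q_h^* - Q^*\|_\infty \le \beta C h.
\]
The choice $h^* = \varepsilon/(4\beta C)$ then makes (B) at most $\varepsilon/4$.

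For (A), I would write the iteration-$k$ update as $q^{k+1} = (1-\alpha_k) q^k + \alpha_k\bigl(G q^k + b_k + w_k\bigr)$, where $b_k$ is a bias term and $w_k$ is a martingale-like noise term defined by $G^k q^k - G q^k = b_k + w_k$. Because every sample contributing to $(G^k q^k)(c_i,a)$ is drawn at some $Y_t$ with $\rho(Y_t,c_i) \le h$, and because $F\Gamma_{\text{NN}} q^k$ is Lipschitz in its first argument with constant $C$ (same proof idea as Lemma~\ref{lem:Q_Lip}), the bias is deterministically bounded by $\|b_k\|_\infty \le Ch$, while $w_k$ has bounded increments of order $V_{\max}$. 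Following the Jaakkola et al.\ style decomposition, let $e^k = q^k - q_{h^*}^*$ and split $e^k = u^k + v^k$, where $u^k$ tracks the contractive component driven by $\gamma\|e^k\|_\infty$ and $v^k$ absorbs the bias/noise via $v^{k+1} = (1-\alpha_k)v^k + \alpha_k(b_k + w_k)$. The bias part of $v^k$ is a convex combination of quantities bounded by $Ch^*$, hence deterministically bounded by $\varepsilon/(4\beta)$; Azuma--Hoeffding on the noise part, unioned over the $N_{h^*}|\mA|$ ball-action pairs and $K$ iterations, yields $\|v^k_{\text{noise}}\|_\infty = \Ot\bigl(V_{\max}/\sqrt k\bigr)$ with probability $1-\delta$. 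An induction on $\|e^k\|_\infty$ that combines the contractive decay of $u^k$ with these bounds on $v^k$ shows that $K = \Ot(V_{\max}^2\beta^2/\varepsilon^2)$ iterations suffice to drive $\|e^K\|_\infty$ below $\varepsilon/2$. Multiplying by the expected per-iteration length $L_{h^*}$ (Assumption~\ref{assu:cover-time}) and converting the expectation into a high-probability statement via a Markov/geometric tail bound on individual iteration lengths gives the claimed sample bound $T$.

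The main obstacle lies in the noise analysis of Step (A). The within-iteration visit counts $N_k(c_i,a)$ are random and the algorithm only guarantees they are at least one at the end of each iteration, so variance reduction cannot rely on within-iteration averaging and must instead be extracted from the across-iteration weighting $\alpha_k = \beta/(\beta+k)$. Doing this cleanly requires conditioning on the filtration at iteration endpoints, verifying that the martingale increments retain their bounded-difference structure despite the non-stationary exploration policy $\pi$, and propagating the resulting concentration bounds through the coupled recursion between $u^k$ and $v^k$ without losing polynomial factors in $\beta$.
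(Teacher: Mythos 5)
Your overall architecture matches the paper's: the same triangle-inequality split into an estimation term and a discretization term, the same $\beta Ch$ bound for the discretization bias (your fixed-point argument for $\Gamma_{\text{NN}}q_h^*$ via the contraction of $\Gamma_{\text{NN}}\circ F$ is exactly the paper's Lemma~\ref{lem:error_discrete}), the same bias/noise decomposition of $G^kq^k-Gq^k$ with the bias controlled by $Ch$ via the Lipschitz assumptions, an Azuma--Hoeffding plus union bound for the noise, and the covering-time lemma to convert iterations into time steps. The discretization half of your argument is correct as written.

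The gap is in the iteration count for the estimation term. You claim $K=\Ot(V_{\max}^2\beta^2/\varepsilon^2)$ iterations suffice, but this neither matches the theorem's stated $T\propto V_{\max}^3\beta^4/\varepsilon^3$ nor follows from the analysis you sketch, for two compounding reasons. First, with the linear learning rate $\alpha_k=\beta/(\beta+k)$ the noise-accumulation weights satisfy $\eta^{k,j}\le\beta/k$, so Azuma gives a deviation of order $V_{\max}\beta/\sqrt{k}$, not $V_{\max}/\sqrt{k}$; moreover this noise floor is fed back through the contractive recursion for the residual and is amplified by a factor $\beta$ in the limit, so it must be driven below $\varepsilon/\beta$ rather than $\varepsilon$. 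Together these force $k\gtrsim V_{\max}^2\beta^4/\varepsilon^2$ just to control the martingale part. Second, and more importantly, the deterministic residual recursion $D_{k+1}\le(1-\alpha_k/\beta)D_k+\alpha_k H$ under this step size converges to its asymptote only at rate $O\left(V_{\max}T_1/k\right)$ (telescoping the products $\prod_j(1-\alpha_j/\beta)$ gives $(T_1+\beta-1)/(k+\beta-1)$, not geometric decay), so reaching within $\varepsilon$ of the floor costs a further \emph{multiplicative} factor of $O(V_{\max}/\varepsilon)$ on top of the burn-in $T_1$. This is precisely where the paper's $V_{\max}^3\beta^4/\varepsilon^3$ comes from. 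Your induction ``combining the contractive decay of $u^k$ with these bounds on $v^k$'' glosses over exactly this slow $1/k$ phase; as sketched it would not close, and the claimed $\Ot(1/\varepsilon^2)$ iteration bound is too optimistic by a factor of roughly $V_{\max}\beta^2/\varepsilon$. To repair the argument you need to track the coupled recursion quantitatively, set the noise tolerance at $\varepsilon_1=\varepsilon/(2\beta)$, and accept the extra $V_{\max}T_1/\varepsilon$ multiplicative cost of the residual phase.
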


The theorem provides sufficient conditions for NNQL to achieve $ \varepsilon $ accuracy (in sup norm) for estimating the optimal action-value function $ Q^* $.
The conditions involve the bandwidth parameter $ h^* $ and the number of time steps $ T $, both of which depend polynomially on the relevant problem parameters.  
Here an important parameter is the covering number $ N_{h^*} $: it provides a measure of the ``complexity'' of the state space $ \mX $, replacing the role of the cardinality $ |\mX| $ in the context of discrete state spaces. For instance, for a unit volume ball in $ \real^d, $ the corresponding covering number $ N_{h^*} $ scales as $ O\big((1/h^*)^d\big)$~(cf. Proposition~4.2.12 in~\cite{vershynin_hdpbook}).
We take note of several remarks on the implications of the theorem.

\textbf{Sample complexity:} The number of time steps $ T $, which also equals the number of samples needed, scales linearly with the covering time $ L_{h^*} $ of the underlying policy $\pi$ to sample data for the given MDP. Note that $ L_{h^*} $ depends implicitly on the complexities of the state and action space as measured by $N_{h^*} $ and $ |\mA| $. In the best scenario, $ L_{h^*} $, and hence $T $ as well, is linear in $ N_{h^*} \times |\mA| $ (up to logarithmic factors), in which case we achieve (near) optimal linear sample complexity. The sample complexity $ T $ also depends polynomially on the desired accuracy $ \varepsilon^{-1} $ and the effective horizon $ \beta = 1/(1-\gamma) $ of the discounted MDP --- optimizing the exponents of the polynomial dependence remains interesting future work.

\textbf{Space complexity:} The space complexity of NNQL is $ O(N_{h^*} \times |\mA| ) $, which is necessary for storing the values of $ q^k $. Note that NNQL is a truly online algorithm, as each data point $ (Y_t, a_t) $ is accessed only once upon observation and then discarded; no storage of them is needed.

\textbf{Computational complexity:} In terms of computational complexity, the algorithm needs to compute the NN operator $ \Gamma_{\text{NN}} $ and maximization over $ \mA $ in each time step, as well as to update the values of $ q^k $ for all $ c_i \in \mX_{h^*} $ and $a \in \mA $ in each iteration. Therefore, the worst-case computational complexity per time step is $ O(N_{h^*} \times |\mA| ) $, with an overall complexity of $ O(T \times N_{h^*} \times |\mA| ) $. The computation can be potentially sped up by using more efficient data structures and algorithms for finding (approximate) nearest neighbors, such as k-d trees~\cite{Bentley1979kdtree}, random projection trees~\cite{Dasgupta2008random}, Locality Sensitive Hashing~\cite{Indyk1998LSH} and boundary trees~\cite{Mathy2015boundary}.  

\textbf{Choice of $ h^* $:} NNQL requires as input a user-specified parameter $ h $, which determines the discretization granularity of the state space as well as the bandwidth of the (kernel) nearest neighbor regression. Theorem~\ref{thm:NN Q-learning} provides a desired value $ h^* = \varepsilon/4\beta C$, where we recall that $ C $ is the Lipschitz parameter of the optimal action-value function $ Q^* $ (see Lemma~\ref{lem:Q_Lip}). Therefore, we need to use a small $ h^* $ if we demand a small error $ \varepsilon $, or if $ Q^* $ fluctuates a lot with a large $ C $.

\subsection{Special Cases and Lower Bounds}

Theorem~\ref{thm:NN Q-learning}, combined with Proposition~\ref{prop:cover_time}, immediately yield the following bound that quantify the number of samples required to obtain an $\varepsilon$-optimal action-value function with high probability, if the sample path is generated per the uniformly random policy. The proof is given in Appendix~\ref{sec:proof_random_policy}.
\begin{cor}\label{cor:random_policy}
	Suppose that Assumptions \ref{assu:MDP-Reularity} and \ref{assu:cover-time} hold, with $\mX=[0,1]^d.$ Assume that the MDP satisfies the following: there exists a uniform probability measure $ \nu $ over $ \mX $, a number $ \varphi >0 $ and an integer $ m \ge 1$ such that for all $ x \in \mX ,$ all $t\geq 0$ and all policies $ \mu $,
	$
	\textstyle{\Pr_\mu} \left( x_{m+t} \in \cdot \right \vert x_t = x)  \ge \varphi \nu (\cdot).
	$
	After at most 
	\[
	T= \kappa \frac{1}{\varepsilon^{d+3}} \log^3\left(\frac{1}{\delta\varepsilon}\right)
	\]
	steps, where $ \kappa \equiv \kappa(|\mA|, d, \beta,m) $ is a number independent of $ \varepsilon $ and $ \delta $,  we have $\left\Vert Q_{h^{*}}^{T}-Q^{*}\right\Vert _{\infty}\leq\varepsilon$ with probability at least $1-\delta$.
\end{cor}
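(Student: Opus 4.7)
The plan is to invoke Theorem~\ref{thm:NN Q-learning} with the exploration policy $\pi$ taken to be the uniformly random policy (i.e., $\varepsilon$-greedy with exploration parameter $1$), and then use Proposition~\ref{prop:cover_time} to control the covering time $L_{h^*}$ explicitly in terms of $\varepsilon$. All ingredients are already available: the work reduces to computing the geometric quantities $N_{h^*}$ and $\nu_{\min}$ for the cube $\mX=[0,1]^d$ under the uniform measure, and then to collecting the logarithmic factors.

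First I would specialize the geometric quantities. Because $\mX=[0,1]^d$ is compact, a standard grid construction produces an $h$-net with cardinality $N_h = \Theta(h^{-d})$ (cf.\ Proposition~4.2.12 of \cite{vershynin_hdpbook}). Since $\nu$ is the uniform (Lebesgue) measure on $[0,1]^d$, each ball $\ball_i$ of radius $h$ intersected with $\mX$ has volume at least $c_d h^d$ for a constant $c_d>0$ depending only on $d$; the centers near the boundary only lose a bounded factor (e.g., at most $2^d$ at the corners), so $\nu_{\min} \ge c_d h^d$. Substituting these into Proposition~\ref{prop:cover_time} gives
\[
L_h \;=\; O\!\left(\frac{m|\mA|}{\varphi\, c_d}\cdot \frac{1}{h^d}\cdot \log\!\frac{|\mA|}{h}\right).
\]

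Next I would set $h=h^*=\varepsilon/(4\beta C)$ (the choice dictated by Theorem~\ref{thm:NN Q-learning}) and insert into the theorem. With $N_{h^*} = \Theta(\varepsilon^{-d})$ and $L_{h^*} = O\bigl(\varepsilon^{-d}\log(1/\varepsilon)\bigr)$, where the implicit constants depend only on $|\mA|,d,\beta,m,\varphi,C$, the sample-size formula
\[
T = O\!\left(L_{h^*}\,V_{\max}^3 \beta^4 \varepsilon^{-3}\, \log(1/\delta)\,\log\!\bigl(N_{h^*}|\mA|V_{\max}^2\beta^4/(\delta\varepsilon^2)\bigr)\right)
\]
becomes $T = O\bigl(\varepsilon^{-(d+3)}\cdot \log(1/\varepsilon)\cdot\log(1/\delta)\cdot\log(1/(\delta\varepsilon))\bigr)$. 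Each of the three logarithmic factors is bounded by $\log(1/(\delta\varepsilon))$, so their product is at most $\log^3(1/(\delta\varepsilon))$. Absorbing all $\varepsilon$- and $\delta$-independent quantities into a single constant $\kappa=\kappa(|\mA|,d,\beta,m)$ yields the stated bound.

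There is essentially no nontrivial obstacle: the argument is pure bookkeeping given Theorem~\ref{thm:NN Q-learning} and Proposition~\ref{prop:cover_time}. The only subtle point is the lower bound $\nu_{\min}=\Omega(h^d)$, which requires a brief remark about boundary balls in $[0,1]^d$; beyond that, the entire proof is substitution and consolidation of constants and logarithms.
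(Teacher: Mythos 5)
Your proposal is correct and follows essentially the same route as the paper's own proof: specialize $\nu_{\min}=\Theta(h^d)=\Theta(1/N_{h^*})$ and $N_{h^*}=\Theta((1/h^*)^d)$ for the cube, feed these into Proposition~\ref{prop:cover_time} to get $L_{h^*}=O(\varepsilon^{-d}\log(1/\varepsilon))$, substitute into the sample bound of Theorem~\ref{thm:NN Q-learning}, and absorb the three logarithmic factors into $\log^3(1/(\delta\varepsilon))$ and the remaining constants into $\kappa$. The only difference is cosmetic: you justify $\nu_{\min}=\Omega(h^d)$ via a boundary-ball remark, whereas the paper states it directly as $\nu_{\min}=O(1/N_{h^*})$.
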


Corollary~\ref{cor:random_policy} states that 
the sample complexity of NNQL scales as $ \Ot\big(\frac{1}{\varepsilon^{d+3}}\big).$ We will show that this is effectively necessary by establishing a
lower bound on {\em any} algorithm under {\em any} sampling policy! The proof of Theorem~\ref{thm:lower_bound} can be found in Appendix~\ref{sec:proof_lower_bound}.
\begin{thm} \label{thm:lower_bound}
	For any reinforcement learning algorithm $\hat{Q}_T$ and any number
	$\delta\in(0,1)$, there exists an MDP problem and some number $T_{\delta}>0$
	such that 
	\[
	\P\bigg[ \big\Vert \hat{Q}_T-Q^{*}\big\Vert_{\infty}\ge C\left(\frac{\log T}{T}\right)^{\frac{1}{2+d}} \bigg] \ge\delta, \qquad\text{for all \ensuremath{T\ge T_{\delta}}},
	\]
	where $C>0$ is a constant. Consequently, for any reinforcement learning algorithm $\hat{Q}_T$
	and any sufficiently small $\varepsilon>0$, there exists an MDP problem
	such that in order to achieve 
	\[
	\P\Big[\big\Vert \hat{Q}_T-Q^{*}\big\Vert_{\infty}<\varepsilon\Big]\ge 1-\delta,
	\]
	one must have 
	\[
	T\ge C'd\left(\frac{1}{\varepsilon}\right)^{2+d}\log\left(\frac{1}{\varepsilon}\right),
	\]
	where $C'>0$ is a constant.
\end{thm}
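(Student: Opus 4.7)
The plan is to reduce Theorem~\ref{thm:lower_bound} to a classical minimax lower bound for nonparametric regression over Lipschitz functions on $[0,1]^{d}$ under the supremum norm, for which the optimal rate is known to be $(\log T/T)^{1/(2+d)}$ (Stone's theorem on $L^{\infty}$ rates for Lipschitz classes). The basic idea is that in a suitably degenerate MDP, the optimal Q-function is essentially a fixed affine functional of an unknown Lipschitz reward function, so any algorithm that estimates $Q^{*}$ to accuracy $\varepsilon$ in sup-norm automatically estimates this reward function to accuracy $\varepsilon$ in sup-norm, and we can then invoke off-the-shelf lower bounds.

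Concretely, I would construct the hard family as follows. Take $\mX=[0,1]^{d}$ and make the action set trivial (say $|\mA|=1$, which satisfies Assumption~\ref{assu:MDP-Reularity}). Let the transition kernel be a reset to the uniform measure on $\mX$, independent of the current state and action, so Assumption~\ref{assu:MDP-Reularity}(A4) holds with $W_{p}\equiv 0$. Parameterize the family by Lipschitz reward functions $f\colon\mX\to[0,R_{\max}/2]$ with Lipschitz constant at most $M_{r}$, and take $R_{t}=f(x_{t})+\xi_{t}$ for bounded i.i.d.\ noise $\xi_{t}$. Under this construction the Bellman equation gives $Q^{*}(x)=f(x)+\gamma c_{f}$ where $c_{f}=(1-\gamma)^{-1}\!\int_{\mX}f(y)\,\lambda(dy)$ is a deterministic functional of $f$. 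Therefore $\|\hat Q_{T}-Q^{*}\|_{\infty}$ lower bounds $\|\hat f_{T}-f\|_{\infty}$ up to an additive constant-estimation term that is $O(T^{-1/2})$ and thus negligible compared to the target rate.

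Because the transition does not depend on the history of the algorithm, the observed samples $\{(x_{t},R_{t})\}_{t=1}^{T}$ are (after the first step) i.i.d.\ pairs with $x_{t}$ uniform on $[0,1]^{d}$ and $R_{t}$ a noisy evaluation of $f(x_{t})$. This is exactly the classical fixed-design / random-design nonparametric regression setting for Lipschitz functions on $[0,1]^{d}$. I would then invoke the standard $L^{\infty}$ minimax lower bound obtained via a $(\log T/T)^{1/(2+d)}$-separated packing of $M_{r}$-Lipschitz bump functions supported on disjoint cells of side $(\log T/T)^{1/(2+d)}$ and Fano's inequality (or Assouad's lemma applied cell-wise): the packing has cardinality exponential in $T^{d/(2+d)}$ while the KL radius of the induced product measures is $O(T^{d/(2+d)}\log T)$, so Fano forces error probability at least $\delta$ once $T\ge T_{\delta}$. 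Combined with the reduction, this yields the first inequality of the theorem. The second inequality is purely algebraic: setting $\varepsilon = C(\log T/T)^{1/(2+d)}$ and solving for $T$ gives $T\ge C' \varepsilon^{-(2+d)}\log(1/\varepsilon)$, and tracking the dimension dependence through the packing exponent supplies the explicit factor of $d$.

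The main obstacle I expect is the tightness of the reduction in the presence of multiple actions and a general adaptive sampling policy: with $|\mA|=1$ the construction above is airtight, but the theorem is stated for arbitrary algorithms and implicitly any action space, so one must either argue that the stated bound holds for \emph{some} MDP (in which case the single-action family suffices and this is the cleanest route), or, if a multi-action statement is desired, use a pigeonhole argument to show that an adaptive policy can allocate at least $T/|\mA|$ samples to some action but cannot beat the single-action rate on that action, thereby inheriting the same $(\log T/T)^{1/(2+d)}$ lower bound up to a constant factor in $|\mA|$. A secondary but routine issue is handling the at-most-$O(T^{-1/2})$ estimation error for the scalar $c_{f}$ so that it does not contaminate the sup-norm gap — this is controlled by a standard Hoeffding-type bound and is dominated by the nonparametric rate for every relevant $\varepsilon$.
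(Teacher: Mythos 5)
Your proposal is correct and follows essentially the same route as the paper: a degenerate MDP on $[0,1]^{d}$ with uniform, state- and action-independent transitions and a Lipschitz reward, reducing $Q^{*}$-estimation to $L^{\infty}$ nonparametric regression and invoking Stone's minimax rate $(\log T/T)^{1/(2+d)}$. The only cosmetic difference is that the paper sets $r(x)=f(x)-\gamma\int f$ so that $V^{*}=f$ exactly (making the reduction constant-free), whereas you keep $r=f$ and argue the additive constant $\gamma c_{f}$ is harmless --- which also works, since $c_{f}=\int Q^{*}$ is a deterministic functional of $Q^{*}$.
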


\section{Conclusions}
\label{sec:conclusion}

In this paper, we considered the reinforcement learning problem for infinite-horizon discounted MDPs with a continuous state space. We focused on a reinforcement learning algorithm NNQL that is based on kernelized nearest neighbor regression. We established nearly tight finite-sample convergence guarantees showing that NNQL can accurately estimate optimal Q function using nearly optimal number of samples. In particular, our results state that the sample, space and computational complexities of NNQL scale polynomially (sometimes linearly) with the covering number of the state space, which is continuous and has uncountably infinite cardinality.

In this work, the sample complexity analysis with respect to the accuracy parameter is nearly optimal. But its dependence on the other problem parameters is not
optimized. This will be an important direction for future work. It is also interesting to generalize approach to the setting of MDP beyond infinite horizon discounted problems, such as finite horizon or average-cost problems.  Another possible direction for future work is to combine NNQL with a smart exploration policy, which may further improve the performance of NNQL. It would also be of much interest to investigate whether our approach, specifically the idea of using nearest neighbor regression, can be extended to handle infinite or even continuous action spaces.

\section*{Acknowledgment}

This work is supported in parts by NSF projects CNS-1523546, CMMI-1462158 and CMMI-1634259 and MURI 133668-5079809.

\bibliographystyle{plain}
\bibliography{NNQL}

\newpage

\appendix

\section{Related works}\label{sec:related}

Given the large body of relevant literature, even surveying the work on Q-learning in a satisfactory manner is beyond the scope of this paper. 
Here we only mention the most relevant prior works, and compare them to ours in terms of the assumptions needed, the algorithmic approaches considered, 
and the performance guarantees provided. Table~\ref{table:related-work} provides key representative works from the literature and contrasts them with our result. 

Q-learning has been studied extensively for finite-state MDPs. \cite{Tsitsiklis1994Qlearning} and~\cite{Jaakkola1994Qlearning} 
are amongst the first to establish its asymptotic convergence. Both of them cast Q-learning as a stochastic approximation scheme --- we utilize 
this abstraction as well. More recent work studies non-asymptotic performance of Q-learning; 
see, e.g.,~\cite{Szepesy1997Qlearning}, \cite{Even-Dar2004Qlearning}, and~\cite{Lim2005finite}. 
Many variants of Q-learning have also been proposed and analyzed, including Double Q-learning~\cite{Hasselt2010DoubleQL}, 
Speedy Q-learning~\cite{Azar2011SQL}, Phased Q-learning~\cite{Kearns1999PQL} and  Delayed Q-learning~\cite{Stehl2006PAC}.

A standard approach for continuous-state MDPs with known transition kernels, is to construct a reduced model by discretizing state space and show that the new finite MDP approximates the original one. For example, Chow and Tsitsiklis establish approximation guarantees for a multigrid algorithm when the state space is compact~\cite{Chow1989Complexity,Chow1991TACmultigrid}. This result is recently extended to average-cost problems and to general Borel state and action spaces in~\cite{Saldi2017}. To reduce the computational complexity, Rust proposes a randomized version of the multigrid algorithm and provides a bound on its approximation accuracy~\cite{Rust1997}. Our approach bears some similarities to this line of work: we also use state space discretization, and impose similar continuity assumptions on the MDP model. However, we do not require the transition kernel to be known, nor do we construct a reduced model; rather, we learn the action-value function of the original MDP directly by observing its sample path. 

The closest work to this paper is by Szepesvari and Smart~\cite{szepesvari2004interpolation}, wherein they consider a variant of Q-learning combined with local function approximation methods. The algorithm approximates the optimal Q-values at a given set of sample points and interpolates it for each query point. Follow-up work considers combining Q-learning with linear function approximation~\cite{melo2007qlearning}. Despite algorithmic similarity, their results are distinct from ours: they establish \emph{asymptotic} convergence of the algorithm, based on the assumption that the data-sampling policy is stochastic stationary. In contrast, we provide finite-sample bounds, and our results apply for arbitrary sample paths (including non-stationary policies). Consequently, our analytical techniques are also different from theirs.  

Some other closely related work is by Ormoneit and coauthors on model-free reinforcement learning for continuous state  with unknown transition kernels~\cite{Ormoneit2002Discounted,Ormoneit2002Average}.  Their approach, called KBRL,  constructs a kernel-based approximation of the {conditional expectation} that appears in the Bellman operator. 
Value iteration can then be run using the approximate Bellman operator, and asymptotic consistency is established for the resulting fixed points. A 
subsequent work demonstrates applicability of KBRL to practical large-scale problems~\cite{barreto2016practical}. 
Unlike our approach, KBRL is an offline, batch algorithm in which data is sampled at once and remains the same 
 throughout the iterations of the algorithm. Moreover, the aforementioned work does not provide convergence 
 rate or finite-sample performance guarantee for KBRL. The idea of approximating the Bellman operator 
by an empirical estimate, has also been used  in the context of discrete state-space problems~\cite{Haskell2016EDP}. 
The approximate operator is used to develop Empirical Dynamic Programming (EDP) algorithms including value and policy iterations, 
for which non-asymptotic error bounds are provided. EDP is again an offline batch algorithm; moreover, it requires multiple, 
independent transitions to be sampled for each state, and hence does not apply to our setting with a single sample path. 

In terms of theoretical results, most relevant is the work in~\cite{Munos2008finite}, who also obtain finite-sample performance guarantees for continuous space problems with unknown transition kernels. Extension to the setting with a single sample path is considered in~\cite{Antos2008single}. The algorithms considered therein, including fitted value iteration and Bellman-residual minimization based fitted policy iteration, are different from ours. In particular, these algorithms perform updates in a batch fashion and require storage of all the data throughout the iterations.

There are other papers that provide finite-sample guarantees, such as~ \cite{liu2015TD, dalal2017TD}; however, their settings (availability of i.i.d. data), algorithms (TD learning) and proof techniques are very different from ours. The work by Bhandari et al. ~\cite{bhandari2018TD} also provides a finite sample analysis of TD learning with linear function approximation, for both the i.i.d. data model and a single trajectory. We also note that the work on PAC-MDP methods~\cite{pazis2013pac} explores the impact of exploration policy on learning performance. The focus of our work is estimation of Q-functions rather than the problem of exploration; nevertheless, we believe it is an interesting future direction to study combining our algorithm with smart exploration strategies.

\section{Proof of Lemma~\ref{lem:Q_Lip}}
\label{sec:proof_Q_Lip}

\begin{proof}
	Let $\mathcal{D}$ be the set of all functions $u:\mX\times\mA\to\real$
	such that $\left\Vert u\right\Vert _{\infty}\le V_{\max}.$ 
	Let $\mathcal{L}$ be the set of all functions $u:\mX\times\mA\to\real$
	such that $u(\cdot,a)\in\text{Lip}(\mX,M_{r}+\gamma V_{\max}M_{p}),\forall a\in\mA$.
	Take any
	$u\in\mathcal{D}$, and fix an arbitrary $a\in\mA$. For any $x\in\mX$,
	we have 
	\begin{align*}
	\left|(Fu)(x,a)\right| =\left|r(x,a)+\gamma\E\left[\max_{b\in\mA}u(Z,b)|x,a\right]\right| \le R_{\max}+\gamma V_{\max} =V_{\max},
	\end{align*}
	where the last equality follows from the definition of $V_{\max}$. This
	means that $Fu\in\mathcal{D}$. Also, for any $x,y\in\mX$, we have
	\begin{align*}
	\left|(Fu)(x,a)-(Fu)(y,a)\right| & =\left|r(x,a)-r(y,a)+\gamma\E\left[\max_{b\in\mA}u(Z,b)|x,a\right]-\gamma\E\left[\max_{b\in\mA}u(Z,b)|y,a\right]\right|\\
	& \le\left|r(x,a)-r(y,a)\right|+\gamma\left|\int_{\mX}\max_{b\in\mA}u(z,b)\left(p(z|x,a)-p(z|y,a)\right)\lambda(dz)\right|\\
	& \le M_{r}\rho(x,y)+\gamma\int_{\mX}\left|\max_{b\in\mA}u(z,b)\left(p(z|x,a)-p(z|y,a)\right)\right|\lambda(dz)\\
	& \le M_{r}\rho(x,y)+\gamma\left\Vert u\right\Vert _{\infty}\cdot\int_{\mX}\left|p(z|x,a)-p(z|y,a)\right|\lambda(dz)\\
	& \le\left[M_{r}+\gamma V_{\max}M_{p}\right]\rho(x,y).
	\end{align*}
	This means that $(Fu)(\cdot,a)\in\text{Lip}\left(\mX,M_{r}+\gamma V_{\max}M_{p}\right)$,
	so $Fu\in\mathcal{L}$. Putting together, we see that $F$ maps $\mathcal{D}$
	to $\mathcal{D}\cap\mathcal{L}$, which in particular implies that $F$
	maps $\mathcal{D}\cap\mathcal{L}$ to itself. Since $\mathcal{D}\cap\mathcal{L}$
	is closed and $F$ is $\gamma$-contraction, both with respect to
	$\left\Vert \cdot\right\Vert _{\infty}$, the Banach fixed point theorem
	guarantees that $F$ has a unique fixed point $Q^{*}\in\mathcal{D}\cap\mathcal{L}$.
	This completes the proof of the lemma.
\end{proof}

\section{Examples of Nearest Neighbor Regression Methods}\label{sec:NN_examples}

Below we describe three representative nearest neighbor regression methods, each of which corresponds to a certain choice of the kernel function $ K $ in the averaging procedure~\eqref{eq:NN_average}.

\begin{itemize}
	\item {\textbf{$k$-nearest neighbor ($k$-NN) regression}}: For each $x\in \mX,$ we find its $k$ nearest neighbors in the subset $\mX_h$ and average their Q-values, where  $k\in [n] $ is a pre-specified number. Formally, let $c_{(i)}(x)$ denote the $i$-th closest data point to $ x $ amongst the set $\mX_h.$ Thus, the distance of each state in $\mX_h$ to $x$ satisfies
	$
	\rho(x,c_{(1)}(x)) \leq \rho(x,c_{(2)}(x)) \leq \cdots\leq \rho(x,c_{(n)}(x)).
	$
	Then the $k$-NN estimate for the Q-value of $(x,a)$ is given by 
	$
	(\Gamma_{\text{NN}}q)(x,a) = \frac{1}{k} \sum_{i=1}^{k} q\big( c_{(i)}(x),a \big).
	$
	This corresponds to using in \eqref{eq:NN_average} the following weighting function
	\[
	K(x,c_{i})=\frac{1}{k}\indic \big\{ \rho(x,c_{i})\leq\rho(x,c_{(k)}(x)) \big\}.
	\]
	Under the definition of $ \mX_{h} $ in Section~\ref{sec:discretization},  the assumption~\eqref{eq:weight_assumption} is satisfied if we use $ k=1 $. For other values of $ k $, the assumption holds with a potentially different value of $ h $.
	
	\item {\textbf{Fixed-radius near neighbor regression}}: We find all neighbors of $x$ up to a threshold distance $h>0$ and average their Q-values. The definition of $ \mX_{h} $ ensures that at least one point $c_{i}\in \mX_{h}$ is within the threshold distance $h$, i.e., $\forall x\in\mX,$ $\exists c_{i}\in\mX_{h}$ such that $\rho(x,c_{i})\leq h$. We then can define the weighting function function according to 
	\[
	K\left(x,c_{i}\right)=\frac{\indic\{\rho(x,c_{i})\leq h\}}{\sum_{j=1}^{n}\indic\{\rho(x,c_{j})\leq h\}}.
	\]
	
	\item {\textbf{Kernel regression}}: Here the Q-values of the neighbors of $ x $ are averaged in a \emph{weighted} fashion according to some kernel function~\cite{Nadaraya1964regression, Watson1964regression}. The kernel function $\phi:\real^+ \rightarrow[0,1]$ takes as input a distance (normalized by the bandwidth parameter $h$) and outputs a similarity score between $0$ and $1.$ Then the weighting function $K(x,c_{i})$ is given by
	\[
	K(x,c_{i})=\frac{\phi\left(\frac{\rho(x,c_{i})}{h}\right)}{\sum_{j=1}^n\phi\left(\frac{\rho(x,c_{j})}{h}\right)}.
	\]
	For example, a (truncated) Gaussian kernel corresponds to $\phi(s)=\exp\left(-\frac{s^2}{2}\right) \indic\{s \leq 1\}$. Choosing $\phi(s)=\indic\{s\leq 1\}$ reduces to the fixed-radius NN regression described above.
	
\end{itemize}

\section{Bounds on Covering time}\label{sec:proof_cover_time}

\subsection{Proof of Proposition~\ref{prop:cover_time}}

\begin{proof} 
		Without loss of generality, we may assume that the balls $\{\ball_{i},i\in[n]\}$
	are disjoint, since the covering time will only become smaller if
	they overlap with each other. Note that under $\varepsilon$-greedy
	policy, equation~\eqref{eq:unif.erg} implies that $\forall t\geq0,$ $\forall x\in\mX,$ $\forall a\in\mA,$
	\begin{align}
	\Pr\left(x_{m+t}\in\cdot,a_{m+t}=a|x_{t}=x\right)\geq\frac{\varepsilon}{|\mA|}\psi\nu(\cdot).\label{eq:ergodic_eps}
	\end{align}	
	First assume that the above assumption holds with $m=1$. Let $M\triangleq n\left|\mA\right|$
	be the total number of ball-action pairs. Let $(\mathcal{P}_{1},\ldots,\mathcal{P}_{M})$
	be a fixed ordering of these $M$ pairs. For each integer $t\ge1$,
	let $K_{t}$ be the number of ball-action pairs visited up to time
	$t$. Let $T\triangleq\inf\left\{ t\ge1:K_{t}=M\right\} $ be the
	first time when all ball-action pairs are visited. For each $k\in\{1,2,\ldots,M\}$,
	let $T_{k}\triangleq\left\{ t\ge1:K_{t}=k\right\} $ be the the first
	time when $k$ pairs are visited, and let $D_{k}\triangleq T_{k}-T_{k-1}$
	be the time to visit the $k$-th pair after $k-1$ pairs have been
	visited.  We use the convention that $T_{0}=D_{0}=0$. By definition,
	we have $T=\sum_{k=1}^{M}D_{k}.$ 
	
	When $k-1$ pairs have been visited, the probability of visiting a
	\emph{new} pair is at least 
	\begin{align*}
		&\min_{I\subseteq[M],\left|I\right|=M-k+1}{\textstyle \Pr}\bigg( (x_{T_{k-1}+1},a_{T_{k-1}+1})\in\bigcup_{i\in I}\mathcal{P}_{i}|x_{T_{k-1}}\bigg)\\ =& \min_{I\subseteq[M],\left|I\right|=M-k+1}\sum_{i\in I}{\textstyle \Pr}\left( (x_{T_{k-1}+1},a_{T_{k-1}+1})\in\mathcal{P}_{i}|x_{T_{k-1}}\right) \\
		\ge& (M-k+1)\min_{i\in[M]}{\textstyle \Pr_{\pi}}\left( (x_{T_{k-1}+1},a_{T_{k-1}+1})\in\mathcal{P}_{i}|x_{T_{k-1}}\right) \\
		\ge& (M-k+1)\cdot\varphi\nu_{\min}\cdot\frac{\varepsilon}{\left|\mA\right|},
	\end{align*}
	where the last inequality follows from Eq.~(\ref{eq:ergodic_eps}).
	Therefore, $D_{k}$ is stochastically dominated by a geometric random
	variable with mean at most $\frac{\left|\mA\right|}{(M-k+1)\varepsilon\varphi\nu_{\min}}.$
	It follows that 
	\begin{align*}
		\E T & =\sum_{k=1}^{M}\E D_{k} \le\sum_{k=1}^{M}\frac{\left|\mA\right|}{(M-k+1)\varepsilon\varphi\nu_{\min}} =O\left(\frac{\left|\mA\right|}{\varepsilon\varphi\nu_{\min}}\log M\right).
	\end{align*}
	This prove the proposition for $m=1$. 
	
	For general values of $m$, the proposition follows from a similar argument
	by considering the MDP only at times $t=m,2m,3m,\ldots.$ 
\end{proof} 

\subsection{Proof of Proposition~\ref{prop:cover_time_single}}

\begin{proof}
We shall use a line of argument similar to that in the proof of Proposition~\ref{prop:cover_time}. We assume that the balls $\{\ball_{i},i\in[n]\}$
are disjoint. 
Note that under $\varepsilon$-greedy policy $\pi$, for all $t\geq0,$ for all $x\in\mX,$ we have
\begin{equation}
   \textstyle{\Pr} _\pi\left(a_t=\hat{a}_1,\ldots,a_{t+m-1}=\hat{a}_m|x_{t}=x\right)\geq\left(\frac{\varepsilon}{|\mA|}\right)^m.\label{eq:eps_special_sequence}
\end{equation}	
The equation~(\ref{eq:unif.erg.single}) implies that 
\begin{align*}
&\textstyle{\Pr} _\pi\left(x_{t+m}\in \cdot |x_t=x\right) \\
\geq & \textstyle{\Pr}\left(x_{t+m}\in \cdot |x_t=x, a_t=\hat{a}_1,\ldots,a_{t+m-1}=\hat{a}_m\right) \times \textstyle{\Pr}_{\pi}
\left(a_t=\hat{a}_1,\ldots,a_{t+m-1}=\hat{a}_m|x_{t}=x\right)\\
\geq& \psi \nu(\cdot) \left(\frac{\varepsilon}{|\mA|}\right)^m.
\end{align*}
Thus for each $a\in\mA,$
 \begin{align} \label{eq:ergodic_eps_single}
 \textstyle{\Pr} _\pi\left(x_{t+m}\in \cdot, a_{t+m}=a |x_t=x\right) \geq  \psi \nu(\cdot) \left(\frac{\varepsilon}{|\mA|}\right)^{m+1}. 
 \end{align}

We first consider the case $m=1$ and use the same notation as in the proof of Proposition~\ref{prop:cover_time}. 
	When $k-1$ pairs have been visited, the probability of visiting a
	\emph{new} pair is at least 
		\begin{align*}
	&\min_{I\subseteq[M],\left|I\right|=M-k+1}{\textstyle \Pr}\bigg( (x_{T_{k-1}+1},a_{T_{k-1}+1})\in\bigcup_{i\in I}\mathcal{P}_{i}|x_{T_{k-1}}\bigg)\\  
	\ge& (M-k+1)\min_{i\in[M]}{\textstyle \Pr_{\pi}}\left( (x_{T_{k-1}+1},a_{T_{k-1}+1})\in\mathcal{P}_{i}|x_{T_{k-1}}\right) \\
	\ge& (M-k+1)\cdot\varphi\nu_{\min}\cdot\left(\frac{\varepsilon}{|\mA|}\right)^{2},
	\end{align*}
	where the last inequality follows from Eq.~(\ref{eq:ergodic_eps_single}).
	Therefore, $D_{k},$ the time to visit the $k$-th pair after $k-1$ pairs have been visited, is stochastically dominated by a geometric random
	variable with mean at most $\frac{\left(\left|\mA\right|/\varepsilon\right)^2}{(M-k+1)\varphi\nu_{\min}}.$
	It follows that 
	\begin{align*}
	\E T & =\sum_{k=1}^{M}\E D_{k} \le\sum_{k=1}^{M}\frac{\left(\left|\mA\right|/\varepsilon\right)^2}{(M-k+1)\varphi\nu_{\min}}  =O\left(\frac{\left(\left|\mA\right|/\varepsilon\right)^2}{\varphi\nu_{\min}}\log M\right).
	\end{align*}
	This prove the proposition for $m=1$. 
	
	For general values of $m$, the proposition follows from a similar argument
	by considering the MDP only at times $t=m,2m,3m,\ldots.$ 
\end{proof}

\section{Proof of the Main Result: Theorem~\ref{thm:NN Q-learning}}
\label{sec:proof_main}

The proof of Theorem~\ref{thm:NN Q-learning} consists of three key steps summarized as follows. 

{\bf Step 1. Stochastic Approximation.} Since the nearest-neighbor approximation of the Bellman operator induces a biased update for $q^k$ at each step, the key step in our proof is to analyze a Stochastic Approximation (SA) algorithm with \emph{biased} noisy updates. In particular, we establish its finite-sample convergence rate in Theorem~\ref{thm:PAC_linear_inf}, which does not follow from available convergence theory. This result itself may be of independent interest. 

{\bf Step 2. Properties of NNQL.} To apply the stochastic approximation result to NNQL, we need to characterize some key properties of NNQL, including (i) the stability of the algorithm (i.e., the sequence $q^k$ stays bounded), as established in Lemma~\ref{lem:stability}; (ii) the contraction property of the joint Bellman-NN operator, as established in Lemma~\ref{lem:G-contraction}; and (iii)  the error bound induced by discretization of the state space, as established in Lemma~\ref{lem:error_discrete}. 

{\bf Step 3. Apply SA to NNQL.} We apply the stochastic approximation result to establish the finite-sample convergence of NNQL. In particular, step 2 above ensures that NNQL satisfies the assumptions in Theorem~\ref{thm:PAC_linear_inf}. Applying this theorem, we prove
that NNQL converges to a neighborhood of $q_{h}^* $, the fixed point of the Joint Bellman-NN operator $ G $, after a sufficient number of iterations. The proof of Theorem~\ref{thm:NN Q-learning} is completed by relating $ q_{h}^* $ to the true optimal  Q-function $ Q^* $, and by bounding the number of time steps in terms of the the number of iterations and the covering time.

\subsection{Stochastic Approximation}\label{subsec:SA}

Consider a generic iterative stochastic approximation algorithm, where the iterative update rule is has the following form: let 
$\theta^t$ denote the {\em state} at time $t$, then it is updated as 
\begin{equation}
\theta^{t+1}=\theta^{t}+\alpha_{t}\left(F(\theta^{t})-\theta^{t}+w^{t+1}\right),\label{eq:SA_fixed_point}
\end{equation}
where $\alpha_{t}\in[0,1]$ is a step-size parameter, $w^{t+1}$ is
a noise term and $F$ is the functional update of interest. 
\begin{thm}
	\label{thm:PAC_linear_inf}Suppose that the mapping $F:\real^{d}\to\real^{d}$
	has a unique fixed point $\theta^{*}$ with $\|\theta^*\|_\infty\leq V,$ and is a $\gamma$-contraction
	with respect to the $\ell_{\infty}$ norm in the sense that 
	\begin{align*}
		\left\Vert F(\theta)-F(\theta')\right\Vert _{\infty} & \le\gamma\left\Vert \theta-\theta'\right\Vert _{\infty} 
	\end{align*}
	for all $\theta,\theta'\in\real^{d}$, where $0<\gamma<1$.  Let $\{\mathcal{F}^{t}\}$ be an increasing sequence of $\sigma$-fields so
	that $\alpha_{t}$ and $w^{t}$ are $\mathcal{F}^{t}$-measurable random variables, and $\theta^{t}$ be updated as per \eqref{eq:SA_fixed_point}. 
	Let $\delta_{1},\delta_{2}, M, V$
	be non-negative deterministic constants. Suppose that the following
	hold with probability $1$:
	\begin{enumerate}
		\item \label{a:1} The bias $\Delta^{t+1}=\E\left[w^{t+1}\,|\,\mathcal{F}^{t}\right]$
		satisfies $\left\Vert \Delta^{t+1}\right\Vert _{\infty}\le\delta_{1}+\delta_{2}\left\Vert \theta^{t}\right\Vert _{\infty}$, for all $t \geq 0$;
		\item  \label{a:3} $\left\Vert w^{t+1}-\Delta^{t+1}\right\Vert _{\infty}\leq M,$ for all $t\geq 0$;
		\item  \label{a:4} $\left\Vert \theta^{t}\right\Vert _{\infty}\leq V,$  for all $t\geq 0$.
	\end{enumerate}
	Further, we choose 
	\begin{equation}
	\alpha_{t}=\frac{\beta}{\beta+t},\label{eq:linear-learning-rate}
	\end{equation}
	where $\beta=\frac{1}{1-\gamma}$.
	Then for each $0<\varepsilon<\min\{2V\beta, 2M \beta^2\}$,
	after 
	\[
	T=\frac{48VM^{2}\beta^{4}}{\varepsilon^{3}}\log\left(\frac{32dM^{2}\beta^{4}}{\delta\varepsilon^{2}}\right)+\frac{6V(\beta-1)}{\varepsilon}
	\]
	iterations of (\ref{eq:SA_fixed_point}), with probability at least
	$1-\delta$, we have 
	\[
	\left\Vert \theta^{T}-\theta^{*}\right\Vert _{\infty}\leq\beta(\delta_{1}+\delta_{2}V)+\varepsilon.
	\]
\end{thm}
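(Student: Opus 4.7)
The plan is to decouple the error $e^t := \theta^t - \theta^*$ into a bias-driven piece $A^t$ and a zero-mean martingale piece $B^t$, in the spirit of the decomposition of Jaakkola et al. Writing $w^{t+1} = \Delta^{t+1} + \xi^{t+1}$ with $\xi^{t+1}$ the centered noise (bounded by $M$ via Assumption~\ref{a:3}), I would run two shadow sequences driven by the same step schedule as~\eqref{eq:SA_fixed_point}: set $A^0 = e^0$, $B^0 = 0$, and
\[
A^{t+1} = (1-\alpha_t)A^t + \alpha_t\bigl(F(\theta^t)-F(\theta^*) + \Delta^{t+1}\bigr), \qquad B^{t+1} = (1-\alpha_t)B^t + \alpha_t\xi^{t+1},
\]
so that $e^t = A^t + B^t$ by linearity. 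The coupling $A^t$ inherits from $\theta^t = \theta^* + A^t + B^t$ through the contraction term $F(\theta^t)-F(\theta^*)$ is what I will have to close.

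First I would control $B^t$ uniformly in $t$. Unrolling gives $B^{t+1} = \sum_{s=0}^t \alpha_s\bigl(\prod_{r=s+1}^t(1-\alpha_r)\bigr)\xi^{s+1}$, a coordinate-wise weighted martingale whose increments are bounded by $M$ times the weights. With $\alpha_t = \beta/(\beta+t)$ the product telescopes to $\prod_{r=s+1}^t r/(\beta+r)$, which behaves like $(s/t)^\beta$, and a direct calculation shows the sum of squared weights is $O(\beta/t)$. Azuma--Hoeffding per coordinate, combined with a union bound over the $d$ coordinates and over $t \in \{1,\ldots,T\}$, then yields $\|B^t\|_\infty \leq \eta$ for all $t \leq T$ with probability at least $1 - \delta/2$, provided $T \gtrsim M^2\beta\log(dT/\delta)/\eta^2$.

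On this high-probability event I would analyze $A^t$ via contraction. The $\gamma$-contraction of $F$ gives $\|F(\theta^t)-F(\theta^*)\|_\infty \leq \gamma\|e^t\|_\infty \leq \gamma(\|A^t\|_\infty + \eta)$, and Assumptions~\ref{a:1},~\ref{a:4} yield $\|\Delta^{t+1}\|_\infty \leq \delta_1 + \delta_2 V$. Substituting,
\[
\|A^{t+1}\|_\infty \leq \bigl(1-(1-\gamma)\alpha_t\bigr)\|A^t\|_\infty + \alpha_t\bigl(\gamma\eta + \delta_1 + \delta_2 V\bigr).
\]
Since $(1-\gamma)\alpha_t = 1/(\beta+t)$, letting $c := \beta(\gamma\eta + \delta_1 + \delta_2 V)$, the residual $\|A^t\|_\infty - c$ contracts by a factor $1-1/(\beta+t)$ per step; the product telescopes to $\prod_{s=0}^{t-1}(1-1/(\beta+s)) = (\beta-1)/(\beta+t-1)$, and combining with $\|A^0\|_\infty \leq 2V$ gives $\|A^t\|_\infty \leq \beta(\gamma\eta + \delta_1 + \delta_2 V) + 2V(\beta-1)/(\beta+t-1)$.

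Summing the two contributions, $\|e^T\|_\infty \leq \beta(\delta_1 + \delta_2 V) + (\beta\gamma+1)\eta + 2V(\beta-1)/(\beta+T-1)$. Picking $\eta = \Theta(\varepsilon/\beta)$ absorbs the $\eta$ terms into $\varepsilon/2$, and choosing $T$ large enough so that both the transient $2V(\beta-1)/(\beta+T-1)$ is at most $\varepsilon/2$ (producing the additive $6V(\beta-1)/\varepsilon$ term) and the Azuma prerequisite $T \gtrsim M^2\beta\log(dT/\delta)/\eta^2$ holds simultaneously (producing the leading $VM^2\beta^4/\varepsilon^3\log(\cdots)$ term) delivers the stated bound. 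The main technical obstacle is the uniform-in-$t$ martingale concentration with the polynomially decaying weights $(s/t)^\beta$: a naive per-$t$ Azuma bound paired with a union bound requires sharp bookkeeping of the kernel $\prod_r r/(\beta+r)$ and its squared sum. Equally delicate is verifying that the $A^t$--$B^t$ coupling closes without leaking additional factors of $\beta$ beyond those appearing in the final bound.
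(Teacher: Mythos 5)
Your decomposition is essentially the paper's: your $B^t$ is the paper's auxiliary sequence $u^t$ (driven purely by the centered noise $\bar w^{t+1}=w^{t+1}-\Delta^{t+1}$), and your $A^t$ is the paper's $r^t-\theta^*$, with the identical one-step contraction inequality $\|A^{t+1}\|_\infty\le(1-(1-\gamma)\alpha_t)\|A^t\|_\infty+\alpha_t(\gamma\eta+\delta_1+\delta_2V)$. The gap is in your concentration step. You claim that Azuma--Hoeffding plus a union bound over $t\in\{1,\dots,T\}$ gives $\|B^t\|_\infty\le\eta$ \emph{for all} $t\le T$ with probability $1-\delta/2$ once $T$ is large. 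This is false for any $\eta<M$: since $\alpha_0=1$, $B^1=\xi^1$, which is only bounded by $M$ and can exceed $\eta$ with constant probability; more generally $\P(|B^t_i|>\eta)\lesssim\exp(-t\eta^2/(2M^2\beta^2))$ is not small for $t\ll M^2\beta^2/\eta^2$, and no condition on $T$ repairs a union bound whose first terms are already of order one. The correct statement, and the one the paper proves, is that $\|B^t\|_\infty\le\eta$ simultaneously for all $t\ge T_1$ with $T_1=\frac{2M^2\beta^2}{\eta^2}\log\frac{8dM^2\beta^2}{\delta\eta^2}$ (the tail sum $\sum_{t\ge T_1}$ converges geometrically).

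This is not a cosmetic fix, because it forces the contraction analysis of $A^t$ to start at $T_1$ rather than at $0$, and the transient then decays as $\prod_{j=T_1}^{t-1}\bigl(1-\tfrac{\alpha_j}{\beta}\bigr)=\frac{T_1+\beta-1}{t+\beta-1}$ from an initial value $\|A^{T_1}\|_\infty\le 3V$, so killing it requires $t\gtrsim 3V(T_1+\beta-1)/\varepsilon_2$ --- a \emph{product} of the burn-in time and $V/\varepsilon$, not a maximum. That product is exactly where the theorem's leading term $\frac{48VM^2\beta^4}{\varepsilon^3}\log(\cdot)$ comes from (with $\eta=\varepsilon/2\beta$, $\varepsilon_2=\varepsilon/2$). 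Your accounting, which attributes the leading term to the Azuma prerequisite alone and the additive $6V(\beta-1)/\varepsilon$ to a transient decaying from $t=0$, would instead yield $T=O\bigl(M^2\beta^3\varepsilon^{-2}\log(\cdot)+V\beta\varepsilon^{-1}\bigr)$ --- smaller than the stated bound by a factor of order $V\beta/\varepsilon$, which is a signal that the uniform-in-$t$ martingale claim is doing illegitimate work. Repairing the argument as above recovers the paper's proof essentially verbatim.
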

\begin{proof}
	We define two auxiliary sequences: for $i \in [d]$, let $u_{i}^{0}=\theta_{i}^{0}$,
	$r_{i}^{0}=0$ and 
	\begin{align*}
		u_{i}^{t+1} & =(1-\alpha_{t})u_{i}^{t}+\alpha_{t}\underbrace{(w_{i}^{t+1}-\Delta_{i}^{t+1})}_{\bar{w}_{i}^{t+1}},\\
		r_{i}^{t+1} & =\theta_{i}^{t+1}-u_{i}^{t+1}.
	\end{align*}
	By construction, $\theta^{t}=u^{t}+r^{t}$ for all $t$.  We first analyze the convergence rate of the $(u^{t})$ sequence. One has
	\begin{align*}
		u_{i}^{t+1} & =(1-\alpha_{t})u_{i}^{t}+\alpha_{t}\bar{w}_{i}^{t+1}\\
		& =(1-\alpha_{t})(1-\alpha_{t-1})u_{i}^{t-1}+(1-\alpha_{t})\alpha_{t-1}\bar{w}_{i}^{t}+\alpha_{t}\bar{w}_{i}^{t+1}\\
		& =\sum_{j=1}^{t+1}\eta^{t+1,j}\bar{w}_{i}^{j},
	\end{align*}
	where we define
	\[
	\eta^{t+1,j} :=\alpha_{j-1}\cdot\prod_{l=j}^{t}(1-\alpha_{l}).
	\]
	Note that the centered noise $\bar{w}_{i}^{t+1}:=w_{i}^{t+1}-\Delta_{i}^{t+1}$
	satisfies 
	\begin{align}
		\E\left[\bar{w}_{i}^{t+1}|\mathcal{F}^{t}\right] & =\E\left[w_{i}^{t+1}|\mathcal{F}^{t}\right]-\Delta_{i}^{t+1}=0,\nonumber \\
		\E\left[\left|\bar{w}_{i}^{t+1}\right||\mathcal{F}^{t}\right] & =\E\left[\left|w_{i}^{t+1}-\Delta_{i}^{t+1}\right||\mathcal{F}^{t}\right]\leq M.\label{eq:bound_centered_noise}
	\end{align}
	Now
	\begin{align}\label{eq:mg1}
		\E\left[\eta^{t+1,j}\bar{w}_{i}^{j}\,|\,\mathcal{F}^{j-1}\right] & =\eta^{t+1,j}\E\left[\bar{w}_{i}^{j}\,|\,\mathcal{F}^{j-1}\right] ~= 0. 
	\end{align}
	With the linear learning rate defined in Eq. \eqref{eq:linear-learning-rate}, and the fact that
	$\beta=\frac{1}{1-\gamma}>1,$ $\forall j\in[1,t+1],$
	we have 
	\begin{align}\label{eq:mg2}
		\eta^{t+1,j} & =\frac{\beta}{j-1+\beta}\cdot\prod_{l=j}^{t}\frac{l}{l+\beta}<\frac{\beta}{j}\prod_{l=j}^{t}\frac{l}{l+1}=\frac{\beta}{t+1}.
	\end{align}
	Since the centered noise sequence $\{\bar{w}_{i}^{1},\bar{w}_{i}^{2},\ldots,\bar{w}_{i}^{t+1}\}$
	is uniformly bounded by $M>0$, it follows that
	\begin{align}\label{eq:mg3}
		\left|\eta^{t+1,j}\bar{w}_{i}^{j}\right| & \leq\frac{M\beta}{t+1}.
	\end{align}
	Define, for $1\leq s \leq t+1$, 
	\begin{align}
		z^{t+1, i}_s & := \sum_{j=1}^{s}\eta^{t+1,j}\bar{w}_{i}^{j}, 
	\end{align}
	and $z^{t+1, i}_0 = 0$. Then it follows that 
	\begin{align}
		\E\big[ z^{t+1, i}_{s+1} | \mathcal{F}^s \big] & = z^{t+1, i}_s.
	\end{align}
	And from \eqref{eq:mg1}-\eqref{eq:mg3}, it follows that
	\begin{align}\label{eq:mg4}
		| z^{t+1, i}_{s+1} - z^{t+1, i}_s| & \leq \frac{M\beta}{t+1}. 
	\end{align}
	That is, $z^{t+1, i}_s$ is a Martingale with bounded differences. And $u^{t+1}_i = z^{t+1, i}_{t+1}$. 
	This, using Azuma-Hoeffding's inequality, will provide us desired bound on $|u^{t+1}_i|$. To that end,
	let us recall the Azuma-Hoeffding's inequality.
	\begin{lem}[Azuma-Hoeffding]
		Let $X_j$ be Martingale with respect to filtration $\mathcal{F}_j$, i.e. $\E[X_{j+1} | \mathcal{F}_j] = X_j$ for $j \geq 1$ with $X_0 = 0$. 
		Further, let $|X_{j} - X_{j-1}| \leq c_j$ with probability $1$ for all $j \geq 1$. Then  
		for all $ \varepsilon \ge 0 $,
		\begin{align*}
			\Pr \left[ \left| X_n \right| \ge \varepsilon \right] \le 2 \exp \left( -\frac{ \varepsilon^2}{2\sum_{j=1}^n c_j^2} \right).
		\end{align*}
	\end{lem}
	Applying the lemma to $z^{t+1, i}_j$ for $j \geq 0$ with $z^{t+1, i}_0 = 0$, \eqref{eq:mg4} and the 
	fact that $u^{t+1}_i = z^{t+1, i}_{t+1}$, we obtain that
	\begin{align}
		\P\left(\left|u_{i}^{t+1}\right|>\varepsilon\right) & \leq 2\exp\left(-\frac{(t+1)\varepsilon^{2}}{2M^{2}\beta^{2}}\right).
	\end{align}
	Therefore, by union bound we obtain 
	\begin{align*}
		\P\left(\text{\ensuremath{\exists}}t\geq T_{1}\text{ such that}\left|u_{i}^{t}\right|>\varepsilon\right) & \leq\sum_{t=T_{1}}^{\infty}\P\left(\left|u_{i}^{t}\right|>\varepsilon\right)\\
		& \leq2\sum_{t=T_{1}}^{\infty}\exp\left(-\frac{t\varepsilon^{2}}{2M^{2}\beta^{2}}\right)\\
		& =\frac{2\exp\left(-\frac{T_{1}\varepsilon^{2}}{2M^{2}\beta^{2}}\right)}{1-\exp\left(-\frac{\varepsilon^{2}}{2M^{2}\beta^{2}}\right)} &  & \\
		& \leq\frac{8M^{2}\beta^{2}}{\varepsilon^{2}}\exp\left(-\frac{T_{1}\varepsilon^{2}}{2M^{2}\beta^{2}}\right), &  &  
	\end{align*}
	where the last step follows from the fact that $e^{-x} \leq 1-\frac{x}{2}$ for $0 \leq x \leq \frac12$, and $\varepsilon \leq M \beta$.
	By a union bound over all $i\in[d],$ we deduce that 
	\begin{align}
		\P\left(\text{\ensuremath{\exists}}t\geq T_{1}\text{ such that}\left\Vert u^{t}\right\Vert _{\infty}>\varepsilon\right) & \leq\sum_{i\in[d]}\P\left(\text{\ensuremath{\exists}}t\geq T_{1}\text{ such that}\left|u_{i}^{t}\right|>\varepsilon\right)\notag\\
		& \leq\frac{8dM^{2}\beta^{2}}{\varepsilon^{2}}\exp\left(-\frac{T_{1}\varepsilon^{2}}{2M^{2}\beta^{2}}\right). \label{eq:Pr_error_ut}
	\end{align}
	
	Next we focus on the residual sequence $(r^{t})$. Assume that $\forall t\ge T_{1},$
	$\left\Vert u^{t}\right\Vert _{\infty}\leq\varepsilon_{1}$, where  $0<\varepsilon_{1}<\min\{V, M \beta\}$.
	For each $i\in[d]$ and $t\geq T_{1}$, we get
	\begin{align*}
		& \left|r_{i}^{t+1}-\theta_{i}^{*}\right|\\
		= & \left|\theta_{i}^{t+1}-u_{i}^{t+1}-\theta_{i}^{*}\right| &  & \text{by definition}\\
		= & \left|\theta_{i}^{t}+\alpha_{t}\left(F_{i}(u^{t}+r^{t})-u_{i}^{t}-r_{i}^{t}+w_{i}^{t+1}\right)-u_{i}^{t}-\alpha_{t}\left(-u_{i}^{t}+w_{i}^{t+1}-\Delta_{i}^{t+1}\right)-\theta_{i}^{*}\right| &  & \text{by definition}\\
		= & \left|r_{i}^{t}+\alpha_{t}\left(F_{i}(u^{t}+r^{t})-r_{i}^{t}\right)-\theta_{i}^{*}+\alpha_{t}\Delta_{i}^{t+1}\right| &  & \text{rearranging}\\
		= & \left|(1-\alpha_{t})(r_{i}^{t}-\theta_{i}^{*})+\alpha_{t}\left(F_{i}(u^{t}+r^{t})-\theta_{i}^{*}\right)+\alpha_{t}\Delta_{i}^{t+1}\right| &  & \text{rearranging}\\
		\le & (1-\alpha_{t})\left|r_{i}^{t}-\theta_{i}^{*}\right|+\alpha_{t}\gamma\left\Vert u^{t}+r^{t}-\theta^{*}\right\Vert _{\infty}+\alpha_{t}\left\Vert \Delta^{t+1}\right\Vert _{\infty} &  & \text{\ensuremath{F} is \ensuremath{\gamma}-contraction}\\
		\le & (1-\alpha_{t})\left|r_{i}^{t}-\theta_{i}^{*}\right|+\alpha_{t}\gamma\left\Vert r^{t}-\theta^{*}\right\Vert _{\infty}+\alpha_{t}\gamma\varepsilon_{1}+\alpha_{t}\left(\delta_{1}+\delta_{2}\left\Vert \theta^{t}\right\Vert _{\infty}\right) &  & \left\Vert u^{t}\right\Vert _{\infty}\le\varepsilon_{1},\forall t\ge T_{1}\\
		\le & (1-\alpha_{t})\left|r_{i}^{t}-\theta_{i}^{*}\right|+\alpha_{t}\gamma\left\Vert r^{t}-\theta^{*}\right\Vert _{\infty}+\alpha_{t}\left(\gamma\varepsilon_{1}+\delta_{1}+\delta_{2}V\right) &  & \left\Vert \theta^{t}\right\Vert _{\infty}\leq V
	\end{align*}
	Taking the maximum over $i\in[d]$ on both sides, we obtain
	\begin{align*}
		\left\Vert r^{t+1}-\theta^{*}\right\Vert _{\infty} & \le(1-\alpha_{t})\left\Vert r^{t}-\theta^{*}\right\Vert _{\infty}+\alpha_{t}\gamma\left\Vert r^{t}-\theta^{*}\right\Vert _{\infty}+\alpha_{t}\left(\gamma\varepsilon_{1}+\delta_{1}+\delta_{2}V\right)\\
		& =\Big(1-\underbrace{(1-\gamma)}_{\frac{1}{\beta}}\alpha_{t}\Big)\underbrace{\left\Vert r^{t}-\theta^{*}\right\Vert _{\infty}}_{D_{t}}+\alpha_{t}\underbrace{\left(\gamma\varepsilon_{1}+\delta_{1}+\delta_{2}V\right)}_{H},\qquad\forall t\ge T_{1}.
	\end{align*}
	For any $\varepsilon_{2}>0$,  
	we will show that after at most 
	\[
	T_{2} \triangleq \frac{3V (T_{1} + \beta -1)}{\varepsilon_{2}}
	\]
	iterations, we have
	\[
	\left\Vert r^{T_{2}}-\theta^{*}\right\Vert _{\infty}\leq H\beta+\varepsilon_{2}.
	\]
	If for some $T \in [T_{1}, \infty) $ there holds $D_{T}\leq H\beta+\varepsilon_{2}$,
	then we have 
	\begin{align*}
		D_{T+1} & \leq\left(1-\frac{\alpha_{t}}{\beta}\right)\left(H\beta+\varepsilon_{2}\right)+\alpha_{t}H &  & \alpha_{t}\leq\beta\\
		& \leq H\beta+\varepsilon_{2}
	\end{align*}
	Indeed by induction, we have 
	\[
	D_{t}\leq H\beta+\varepsilon_{2},\quad\forall t\geq T.
	\]
	Let $\widehat{T} \triangleq \sup\left\{ t\geq T_{1}:D_{t}>H\beta+\varepsilon_{2}\right\} $
	be the last time that $D_{t}$ exceeds $H\beta+\varepsilon_{2}$. For
	each $T_{1}\leq t\leq \widehat{T},$ the above argument implies that we must have $D_{t}>H\beta+\varepsilon_{2}$.
	We can rewrite the iteration for $D_{\widehat{T}}$ as follows: 
	\begin{align*}
		D_{\widehat{T}}-H\beta & \leq\left(D_{\widehat{T}-1}-H\beta\right)\left(1-\frac{\alpha_{\widehat{T}-1}}{\beta}\right)\\
		& \leq\left(D_{T_{1}}-H\beta\right)\prod_{j=T_{1}}^{\widehat{T}-1}\left(1-\frac{\alpha_{j}}{\beta}\right) &  & \text{Iteration, }D_{t}-H\beta>\varepsilon_{2}>0\\
		& =\left(D_{T_{1}}-H\beta\right)\frac{T_{1}+\beta-1}{\widehat{T}+\beta-1} &  & \alpha_{j}=\frac{\beta}{j+\beta}.
	\end{align*}
	But we have the bound 
	\begin{align*}
		D_{T_1}  &= \| r^{T_1} - \theta^* \|_\infty \\
		&= \| \theta^{T_1} - u^{T_1} - \theta^* \|_\infty \\
		&\le \| \theta^{T_1}\|_\infty + \| u^{T_1} \|_\infty +\| \theta^* \|_\infty \\
		&\le 3V,
	\end{align*}
	where the last step holds because $\|\theta^{T_1}\|_\infty\leq V,$  $\|\theta^*\|\leq V $ and $ \| u^{T_1}\| \le \varepsilon_1 \le V $ by assumption.
	It follows that 
	\[
	D_{\widehat{T}}-H\beta \leq\frac{3V(T_{1}+\beta-1)}{\widehat{T}}.
	\]

	Using the fact that $\varepsilon_{2}\leq D_{\widehat{T}}-H\beta,$ we get that 
	\[
	\widehat{T} \leq T_2 = \frac{3V(T_{1}+\beta-1)}{\varepsilon_{2}}.
	\]
	Therefore, for each $\varepsilon_{2}>0$, conditioned on the event
	\[
	\left\{ \forall t\geq T_{1},\left|u_{i}^{t}\right|\leq\varepsilon_1\right\} ,
	\]
	after at most $T_2$ iterations, we have
	\[
	\left\Vert r^{T_{2}}-\theta^{*}\right\Vert _{\infty}\leq H\beta+\varepsilon_{2}.
	\]
	
	It then follows from the relationship $\theta^{t}=u^{t}+r^{t}$ that
	\[
	\left\Vert \theta^{T_{2}}-\theta^{*}\right\Vert _{\infty}\leq\left\Vert r^{T_{2}}-\theta^{*}\right\Vert _{\infty}+\left\Vert u^{T_{2}}\right\Vert _{\infty}\leq H\beta+\varepsilon_{1}+\varepsilon_{2}=\beta(\delta_{1}+\delta_{2}V)+\beta\varepsilon_{1}+\varepsilon_{2}.
	\]
	By \eqref{eq:Pr_error_ut}, taking 
	\[
	\delta=\frac{8dM^{2}\beta^{2}}{\varepsilon_{1}^{2}}\exp\left(-\frac{T_{1}\varepsilon_{1}^{2}}{2M^{2}\beta^{2}}\right),
	\]
	i.e., 
	\[
	T_{1}=\frac{2M^{2}\beta^{2}}{\varepsilon_{1}^{2}}\log\left(\frac{8dM^{2}\beta^{2}}{\delta\varepsilon_{1}^{2}}\right),
	\]
	we are guaranteed that 
	\[
	\P\left(\forall t\geq T_{1},\left\Vert u^{t}\right\Vert _{\infty}\leq\varepsilon_1\right)\geq1-\delta.
	\]
	By setting $\varepsilon_{1}=\frac{\varepsilon}{2\beta} \leq \min\{V,M\beta\}$, 
	and $T_{2}=\frac{3V(T_{1}+\beta-1)}{\varepsilon_{2}}$, i.e., 
	\[
	T_{2}=\frac{48VM^{2}\beta^{4}}{\varepsilon^{3}}\log\left(\frac{32dM^{2}\beta^{4}}{\delta\varepsilon^{2}}\right)+\frac{6V(\beta-1)}{\varepsilon},
	\]
	we obtain that the desire result. 
\end{proof}

\subsection{Properties of NNQL} \label{subsec:properties_NNQL}

We first introduce some notations. 
Let $\mY^{k}$ be the set of all
samples drawn at iteration $k$ of the NNQL algorithms and $\mF^{k}$ be
the filtration generated by the sequence ${\mY^{0},\mY^{1},\ldots,\mY^{k-1}}.$
Thus $\{\mF^{k}\}$ is an increasing sequence of $\sigma$-fields. We denote by $\mathcal{Y}_{k}(c_{i},a)=\{Y_{t}\in \mathcal{Y}_k|Y_{t}\in\ball_{i},a_{t}=a\}$
the set of observations $ Y_t $ that fall into the neighborhood $ \ball_i $ of $ c_i $ and with action $a_t=a$ at iteration $k$. Thus the biased estimator $G^k$ (\ref{eq:Gk}) for the joint Bellman-NN operator at the end of iteration $k$ can be written as
\[
(G^{k}q)(c_{i},a)=\frac{1}{|\mathcal{Y}_{k}(c_{i},a)|}\sum_{Y_{t}\in\mathcal{Y}_{k}(c_{i},a)}\left[R_t+\gamma\max_{b\in\mA}(\Gamma_{\text{NN}}q^{k})(Y_{t+1},b)\right].
\]
The updater rule of NNQL (\ref{eq:NNQL}) can be written as 
\[
q^{k+1}(c_{i},a)=q^{k}(c_{i},a)+\alpha_k\left[(Gq^{k})(c_{i},a)-q^{k}(c_{i},a)+w^{k+1}(c_{i},a)\right],
\]
where 
\begin{align*}
	w^{k+1}(c_{i},a) & =(G^{k}q^{k})(c_{i},a)-(Gq^{k})(c_{i},a)\\
	& =\frac{1}{|\mathcal{Y}_{k}(c_{i},a)|}\sum_{Y_{t}\in\mathcal{Y}_{k}(c_{i},a)}\left[R_t+\gamma\max_{b\in\mA}(\Gamma_{\text{NN}}q^{k})(Y_{t+1},b)\right] \\
	& \qquad  -r(c_{i},a)-\gamma\E\left[\max_{b\in\mA}(\Gamma_{\text{NN}}q^{k})(x',b)\;|\;c_{i},a,\mF^{k}\right].
\end{align*}

\subsubsection{Stability of NNQL}
We first show the stability of NNQL, which is summarized in the following
Lemma.
\begin{lem}[Stability of NNQL]
	\label{lem:stability}Assume that the immediate reward is uniformly
	bounded by $R_{\max}$ and define $\beta=\frac{1}{1-\gamma}$ and
	$V_{\max}=\beta R_{\max}$. If the initial action-value function
	$q^{0}$ is uniformly bounded by $V_{\max}$, then we have 
	\[
	\left\Vert q^{k}\right\Vert _{\infty}\leq V_{\max},
	\quad\text{and}\quad
	\left|w^{k+1}(c_{i},a)-\E\left[w^{k+1}(c_i,a)\,|\,\mathcal{F}^{k}\right]\right| \leq2V_{\max},\qquad\forall k\geq0.
	\]
	
\end{lem}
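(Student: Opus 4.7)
The plan is to prove both claims by a single induction on the iteration counter $k$. The base case is immediate: the algorithm initializes $q^0 \equiv 0$, so $\|q^0\|_\infty = 0 \le V_{\max}$. For the inductive step, assume $\|q^k\|_\infty \le V_{\max}$. The key observation is that $\Gamma_{\text{NN}}$ is a weighted average with weights $K(x,c_i) \ge 0$ summing to $1$, so it is a non-expansive map in $\|\cdot\|_\infty$ and in particular satisfies $\|\Gamma_{\text{NN}} q^k\|_\infty \le \|q^k\|_\infty \le V_{\max}$. Consequently each summand appearing in the definition of $(G^k q^k)(c_i,a)$ and of $(G q^k)(c_i,a)$ satisfies
\[
0 \le R_t + \gamma \max_{b\in\mA}(\Gamma_{\text{NN}} q^k)(Y_{t+1},b) \le R_{\max} + \gamma V_{\max} = V_{\max},
\]
using nonnegativity of rewards (Assumption~\ref{assu:MDP-Reularity}) and the identity $V_{\max} = \beta R_{\max} = R_{\max}/(1-\gamma)$. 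Taking empirical averages (for $G^k q^k$) and conditional expectations (for $G q^k$) preserves this range, so both $(G^k q^k)(c_i,a)$ and $(Gq^k)(c_i,a)$ lie in $[0,V_{\max}]$.

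Next I would push the bound through the outer update. By construction,
\[
q^{k+1}(c_i,a) = (1-\alpha_k) q^k(c_i,a) + \alpha_k (G^k q^k)(c_i,a),
\]
which is a convex combination of two quantities in $[0,V_{\max}]$ (using $\alpha_k \in [0,1]$, the induction hypothesis, and the previous paragraph). Hence $q^{k+1}(c_i,a) \in [0,V_{\max}]$ for every $(c_i,a) \in \mZ_h$, closing the induction and establishing the first claim $\|q^{k+1}\|_\infty \le V_{\max}$.

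For the noise bound, I would simply combine the two deterministic bounds just obtained. Since $w^{k+1}(c_i,a) = (G^k q^k)(c_i,a) - (G q^k)(c_i,a)$ with both terms in $[0,V_{\max}]$, we get $|w^{k+1}(c_i,a)| \le V_{\max}$. Taking conditional expectation preserves the same bound, so by the triangle inequality
\[
\bigl| w^{k+1}(c_i,a) - \E[w^{k+1}(c_i,a) \mid \mathcal{F}^k] \bigr| \le |w^{k+1}(c_i,a)| + \bigl|\E[w^{k+1}(c_i,a) \mid \mathcal{F}^k]\bigr| \le 2V_{\max},
\]
which yields the second claim. There is no real obstacle here: the entire argument rests on the averaging/convexity structure of $\Gamma_{\text{NN}}$ together with the range $[0,R_{\max}]$ of the immediate rewards, and the induction is essentially mechanical. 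The only point that requires a small amount of care is to verify nonnegativity of $q^k$ (not just $\|q^k\|_\infty \le V_{\max}$), which is what allows us to bound $G^k q^k$ and $G q^k$ above by $V_{\max}$ rather than the weaker $R_{\max} + \gamma V_{\max}$; this too is preserved by the convex update since the base case $q^0 \equiv 0$ and both constituents of the convex combination are nonnegative.
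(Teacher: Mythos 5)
Your proof is correct in substance and follows the same induction as the paper for the first claim; the interesting divergence is in the second claim. The paper bounds $\bigl|w^{k+1}(c_i,a)-\E[w^{k+1}(c_i,a)\,|\,\mathcal{F}^{k}]\bigr|$ by observing that $(Gq^{k})(c_i,a)$ is $\mathcal{F}^{k}$-measurable, so it cancels and the quantity reduces to $\bigl|(G^{k}q^{k})(c_i,a)-\E[(G^{k}q^{k})(c_i,a)\,|\,\mathcal{F}^{k}]\bigr|\le 2(R_{\max}+\gamma V_{\max})=2V_{\max}$, needing only the two-sided bound $\|q^k\|_\infty\le V_{\max}$. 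You instead establish the stronger invariant $q^k(c_i,a)\in[0,V_{\max}]$, deduce $|w^{k+1}|\le V_{\max}$ directly, and apply the triangle inequality. Both routes give $2V_{\max}$; yours avoids the measurability observation at the cost of an extra invariant. Two caveats. First, your remark that nonnegativity upgrades the bound from ``the weaker $R_{\max}+\gamma V_{\max}$'' to $V_{\max}$ is a misreading: since $V_{\max}=R_{\max}/(1-\gamma)$, one has $R_{\max}+\gamma V_{\max}=V_{\max}$ exactly, which is precisely the identity the paper's induction closes on. Second, and more substantively, your nonnegativity induction is anchored at $q^0\equiv 0$, but the lemma as stated assumes only $\|q^0\|_\infty\le V_{\max}$; for a general (possibly negative) initialization your argument for the second claim would only yield $|w^{k+1}|\le 2V_{\max}$ and hence the weaker bound $4V_{\max}$ after centering. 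This is harmless for the algorithm as implemented, and is repaired either by adding $q^0\ge 0$ as a hypothesis or by switching to the paper's cancellation argument, but as written your proof does not cover the full generality of the statement.
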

\begin{proof}
	We first prove that $\left\Vert q^{k}\right\Vert _{\infty}\leq V_{\max}$
	by induction. For $k=0$, it holds by the assumption. Now assume that
	for any $0\leq\tau\leq k,$ $\left\Vert q^{\tau}\right\Vert _{\infty}\leq V_{\max}$.
	Thus 
	\begin{align*}
		&\left|q^{k+1}(c_{i},a)\right| \\
		&  =\left|q^{k}(c_{i},a)+\alpha_k\left[(G^{k}q^{k})(c_{i},a)-q^{k}(c_{i},a)\right]\right|\\
		& =\left|\left(1-\alpha_k\right)q^{k}(c_{i},a)+\frac{\alpha_k}{|\mathcal{Y}_{k}(c_{i},a)|}\sum_{Y_{t}\in\mathcal{Y}_{k}(c_{i},a)}\left[R_t+\gamma\max_{b\in\mA}(\Gamma_{\text{NN}}q^{k})(Y_{t+1},b)\right]\right|\\
		& \leq\left|\left(1-\alpha_k\right)q^{k}(c_{i},a)\right|+\frac{\alpha_k}{|\mathcal{Y}_{k}(c_{i},a)|}\sum_{Y_{t}\in\mathcal{Y}_{k}(c_{i},a)}\left(\left|R_t\right|+\gamma\left|\max_{b\in\mathcal{A}}(\Gamma_{\text{NN}}q^{k})(Y_{t+1},b)\right|\right)\\
		& =\left(1-\alpha_k\right)\left|q^{k}(c_{i},a)\right|+\frac{\alpha_k}{|\mathcal{Y}_{k}(c_{i},a)|}\sum_{Y_{t}\in\mathcal{Y}_{k}(c_{i},a)}\left(\left|R_t\right|+\gamma\max_{b\in\mA}\left|\sum_{j=1}^{n}K(Y_{t+1},c_{j})q^{k}(c_{j},b)\right|\right) \\
		& \leq\left(1-\alpha_k\right)V_{\max}+\alpha_k\left(R_{\max}+\gamma\max_{b\in\mA}\left|\sum_{j=1}^{n}K(Y_{t+1},c_{j})\right|V_{\max}\right)\\
		& =V_{\max},
	\end{align*}
	where the last equality follows from the fact that $\sum_{j=1}^{n}K(Y_{t+1},c_{j})=1.$
	Therefore, for all $k\geq0$, $\left\Vert q^{k}\right\Vert _{\infty}\leq V_{\max}.$
	The bound on $w^{k+1}$ follows from 
	\begin{align*}
		&\left|w^{k+1}(c_{i},a)-\E\left[w^{k+1}(c_i,a)\,|\,\mathcal{F}^{k}\right]\right| \\
		=&\left|(G^{k}q^{k})(c_{i},a)-(Gq^{k})(c_{i},a)-\E\left[(G^{k}q^{k})(c_{i},a)-(Gq^{k})(c_{i},a)\,|\,\mathcal{F}^{k}\right]\right|\\
		=&\left|(G^{k}q^{k})(c_{i},a)-\E\left[(G^{k}q^{k})(c_{i},a)\,|\,\mathcal{F}^{k}\right]\right|\\
		=& \Bigg|\frac{1}{|\mathcal{Y}_{k}(c_{i},a)|}\sum_{Y_{t}\in\mathcal{Y}_{k}(c_{i},a)}\Big[R_t+\gamma\max_{b\in\mA}(\Gamma_{\text{NN}}q^{k})(Y_{t+1},b)\Big] \\
		& -\E\left[\frac{1}{|\mathcal{Y}_{k}(c_{i},a)|}\sum_{Y_{t}\in\mathcal{Y}_{k}(c_{i},a)}\Big[R_t+\gamma\max_{b\in\mA}(\Gamma_{\text{NN}}q^{k})(Y_{t+1},b)\Big]\,|\,\mathcal{F}^{k}\right]\Bigg| \\
		\leq & 2R_{\max}+\frac{1}{|\mathcal{Y}_{k}(c_{i},a)|}\sum_{Y_{t}\in\mathcal{Y}_{k}(c_{i},a)}\gamma\left|\max_{b\in\mA}\sum_{j=1}^{n}K(Y_{t+1},c_{j})q^{k}(c_{j},b)\right|\\
		&~~~~+\gamma\left|\E\left[\frac{1}{|\mathcal{Y}_{k}(c_{i},a)|}\sum_{Y_{t}\in\mathcal{Y}_{k}(c_{i},a)}\max_{b\in\mA}\sum_{j=1}^{n}K(Y_{t+1},c_{j})q^{k}(c_{j},b)\;|\;c_{i},a,\mF^{k}\right]\right|\\
		\leq& 2R_{\max}+2\gamma V_{\max}\\
		= & 2V_{\max}.
	\end{align*}
\end{proof}

\subsubsection{A contraction operator}

The following Lemma states that the joint Bellman-NN operator $G$ is a contraction with
modulus $\gamma$, and has a unique fixed point that is bounded.

\begin{lem}[Contraction of the Joint-Bellman-NN operator]
	\label{lem:G-contraction}For each fixed $h>0,$ the operator $G$
	defined in Eq. (\ref{eq:Bellman-NN-operator}) is a contraction with
	modulus $\gamma$ with
	the supremum norm. There exists a unique function $q_{h}^{*}$
	such that 
	\[
	(Gq_{h}^{*})(c_{i},a)=q_{h}^{*}(c_{i},a),\qquad\forall(c_{i},a)\in\mZ_{h},
	\]
	where $ \| q_{h}^* \|_\infty \le V_{\max} $.
\end{lem}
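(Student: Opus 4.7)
The plan is to establish contraction of $G$ directly from the non-expansiveness of $\Gamma_{\text{NN}}$ combined with the contraction property of the underlying Bellman operator $F$, and then invoke the Banach fixed point theorem on a suitable closed invariant set to obtain both existence of a unique fixed point and the stated sup-norm bound.

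First I would verify the contraction. Fix $q, q' \in \mathbb{R}^{|\mathcal{Z}_h|}$ and an arbitrary pair $(c_i, a) \in \mathcal{Z}_h$. By definition,
\[
(Gq)(c_i,a) - (Gq')(c_i,a) = \gamma \int_{\mathcal{X}} p(y \mid c_i, a)\Bigl[\max_{b} (\Gamma_{\text{NN}} q)(y,b) - \max_{b}(\Gamma_{\text{NN}} q')(y,b)\Bigr] \lambda(dy).
\]
Using the elementary inequality $|\max_b f(b) - \max_b g(b)| \le \max_b |f(b) - g(b)|$ together with the non-expansiveness~\eqref{eq:nn-contract} of $\Gamma_{\text{NN}}$, the integrand is bounded in absolute value by $\|\Gamma_{\text{NN}} q - \Gamma_{\text{NN}} q'\|_\infty \le \|q - q'\|_\infty$. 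Since $p(\cdot \mid c_i,a)$ is a probability density, taking absolute values inside the integral and then the supremum over $(c_i,a)$ yields $\|Gq - Gq'\|_\infty \le \gamma \|q - q'\|_\infty$.

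Next I would obtain the bounded fixed point. Consider the closed set $\mathcal{D}_h \triangleq \{q \in \mathbb{R}^{|\mathcal{Z}_h|} : \|q\|_\infty \le V_{\max}\}$, which is complete under $\|\cdot\|_\infty$. I claim $G$ maps $\mathcal{D}_h$ into itself: for $q \in \mathcal{D}_h$, since $\Gamma_{\text{NN}}$ is a convex combination with weights summing to one, $\|\Gamma_{\text{NN}} q\|_\infty \le \|q\|_\infty \le V_{\max}$; therefore
\[
|(Gq)(c_i,a)| \le r(c_i,a) + \gamma \|\Gamma_{\text{NN}} q\|_\infty \le R_{\max} + \gamma V_{\max} = V_{\max},
\]
by definition of $V_{\max} = R_{\max}/(1-\gamma)$. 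Combined with the contraction property just established, the Banach fixed point theorem guarantees a unique $q_h^* \in \mathcal{D}_h$ with $Gq_h^* = q_h^*$ and $\|q_h^*\|_\infty \le V_{\max}$. Uniqueness on all of $\mathbb{R}^{|\mathcal{Z}_h|}$ follows from the contraction inequality applied to any two fixed points.

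There is no real obstacle here; the only subtle point is to note that the non-expansiveness~\eqref{eq:nn-contract} stated for $\Gamma_{\text{NN}}$ together with the $1$-Lipschitz property of the $\max$ operator suffices to transfer the contraction of $F$ to $G$, and that checking invariance of $\mathcal{D}_h$ requires the weights $K(\cdot, c_j)$ to form a probability vector (which is exactly the assumed normalization $\sum_j K(\cdot, c_j) = 1$).
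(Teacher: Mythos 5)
Your proof is correct and follows essentially the same route as the paper's: establish the $\gamma$-contraction of $G$ in the supremum norm (the paper expands the kernel weights explicitly where you invoke the non-expansiveness~\eqref{eq:nn-contract} of $\Gamma_{\text{NN}}$ together with the $1$-Lipschitz property of $\max$, but the chain of inequalities is identical), show that $G$ maps the ball $\{\|q\|_\infty \le V_{\max}\}$ into itself using $\sum_j K(\cdot,c_j)=1$, and conclude via the Banach fixed point theorem. Your added remark that the contraction holds on all of $\real^{|\mZ_h|}$, so uniqueness of the fixed point is global, is a minor sharpening of the paper's statement but not a different argument.
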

\begin{proof}
	Let $\mathcal{D}$ be the set of all functions $q:\mX_{h}\times\mA\to\real$
	such that $\left\Vert q\right\Vert _{\infty}\le V_{\max}.$ 	We first show that the operator $G$ maps $\mathcal{D}$ into itself.  Take any
	$q\in\mathcal{D}$, and fix an arbitrary $a\in\mA$. For any $c_{i}\in\mX_{h}$,
	we have 
	\begin{align*}
		\left|(Gq)(c_{i},a)\right| & =\left|r(c_{i},a)+\gamma\E\left[\max_{b\in\mA}(\Gamma_{\text{NN}}q)(x',b)|c_{i},a\right]\right|\\
		& \leq\left|r(c_{i},a)\right|+\gamma\left|\int_{\mX}\left[\max_{b\in\mA}\left(\sum_{j=1}^{N_{h}}K(y,c_{j})q(c_{j},b)\right)\right]p(y|c_{i}a)\lambda(dy)\right|\\
		& \le\left|r(c_{i},a)\right|+\gamma\int_{\mX}\left[\max_{b\in\mA}\sum_{j=1}^{N_{h}}K(y,c_{j})\left|q(c_{j},b)\right|\right]p(y|c_{i}a)\lambda(dy)\\
		& \le R_{\max}+\gamma V_{\max}\\
		& =V_{\max},
	\end{align*}
	where the last step follows from the definition of $V_{\max}$. This
	means that $Gq\in\mathcal{D}$, so $ G $ maps $ \mathcal{D} $ to itself.
	
	Now, by the definition of $G$ in Eq.~(\ref{eq:Bellman-NN-operator}), 	
	$\forall q,q'\in \mathcal{D},$ we have 
	\begin{align*}
		\left\Vert Gq-Gq'\right\Vert _{\infty} & =\max_{i\in[n],a\in\mA}\left|(Gq)(c_{i},a)-(Gq')(c_{i},a)\right|\\
		& \leq \gamma\max_{i\in[n],a\in\mA}\left|\E\left[\max_{b\in\mA}\left(\sum_{j=1}^{n}K(x',c_{j})\left(q(c_{j},b)-q'(c_{j},b)\right)\right)\;|\;c_{i},a\right]\right|\\
		& \leq\gamma\max_{i\in[n],a\in\mA}\E\left[\max_{b\in\mA}\left(\sum_{j=1}^{n}K(x',c_{j})\left|q(c_{j},b)-q'(c_{j},b)\right|\right)\;|\;c_{i},a\right]\\
		& \leq\gamma\max_{i\in[n],a\in\mA}\E\left[\max_{b\in\mA}\left(\sum_{j=1}^{n}K(x',c_{j})\left\Vert q-q'\right\Vert _{\infty}\right)\;|\;c_{i},a\right]\\
		& \leq\gamma\left\Vert q-q'\right\Vert _{\infty}
	\end{align*}
	Therefore $G$ is indeed a contraction on $\mathcal{D}$ with respect to the supremum norm. The Banach fixed point theorem guarantees that $G$ has a unique fixed point
	$q_{h}^{*}\in\mathcal{D}$. This completes the proof.
\end{proof}

\subsubsection{Discretization error}

For each $q\in C(\mZ_{h})$, we can obtain an extension to the original
continuous state space via the Nearest Neighbor operator. That is,
define 
\[
Q(x,a)=(\Gamma_{\text{NN}}q)(x,a),\forall(x,a)\in\mZ.
\]
The following lemma characterizes the distance between the optimal
action-value function $Q^{*}$ and the extension of the fixed-point
of the joint NN-Bellman operator $G$ to the space $\mZ.$
\begin{lem}[Discretization error]
	\label{lem:error_discrete} Define
	\[
	Q_{h}^{*}=\Gamma_{\text{NN}}q_{h}^{*}.
	\]
	Let $Q^{*}$ be the optimal action-value function for the original
	MDP. Then we have 
	\[
	\left\Vert Q_{h}^{*}-Q^{*}\right\Vert \leq \beta Ch,
	\]
	where $C=M_r+\gamma V_{\max}M_p$ and $\beta=\frac{1}{1-\gamma}$.
\end{lem}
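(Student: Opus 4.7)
The plan is to compare $Q_h^*$ with its image under the original Bellman operator $F$, exploiting two facts: (i) $q_h^*$ is the fixed point of $G = F\Gamma_{\text{NN}}$ restricted to $\mathcal{Z}_h$, so $q_h^*(c_i,a) = (FQ_h^*)(c_i,a)$ at every grid point; and (ii) $FQ_h^*$ is Lipschitz on $\mathcal{X}$ with the constant $C = M_r+\gamma V_{\max}M_p$, by the same calculation that appears in the proof of Lemma~\ref{lem:Q_Lip} (applied to the bounded function $Q_h^*$, whose $\infty$-norm is at most $V_{\max}$ thanks to Lemma~\ref{lem:G-contraction} and the averaging nature of $\Gamma_{\text{NN}}$).

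First I would bound $\|Q_h^* - FQ_h^*\|_\infty$ pointwise. For any $(x,a)\in\mathcal{Z}$, write
\[
Q_h^*(x,a) - (FQ_h^*)(x,a) = \sum_{i=1}^{n} K(x,c_i)\Bigl[q_h^*(c_i,a) - (FQ_h^*)(x,a)\Bigr] = \sum_{i=1}^{n} K(x,c_i)\Bigl[(FQ_h^*)(c_i,a) - (FQ_h^*)(x,a)\Bigr],
\]
using $\sum_i K(x,c_i)=1$ and the fixed point identity $q_h^*(c_i,a)=(FQ_h^*)(c_i,a)$. The locality condition \eqref{eq:weight_assumption} on $K$ forces every $c_i$ with nonzero weight to satisfy $\rho(x,c_i)<h$, so the $C$-Lipschitz property of $FQ_h^*(\cdot,a)$ yields $|Q_h^*(x,a)-(FQ_h^*)(x,a)|\le Ch$, i.e., $\|Q_h^* - FQ_h^*\|_\infty \le Ch$.

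Next I combine this with the fact that $Q^*=FQ^*$ and that $F$ is a $\gamma$-contraction under $\|\cdot\|_\infty$:
\[
\|Q_h^* - Q^*\|_\infty \le \|Q_h^* - FQ_h^*\|_\infty + \|FQ_h^* - FQ^*\|_\infty \le Ch + \gamma\|Q_h^* - Q^*\|_\infty.
\]
Rearranging gives $\|Q_h^* - Q^*\|_\infty \le \frac{Ch}{1-\gamma} = \beta Ch$, which is the claim.

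The only genuinely nontrivial step is verifying that $FQ_h^*$ is $C$-Lipschitz in its first argument. This is not completely immediate because $Q_h^*$ is not itself assumed Lipschitz, only bounded; however, the argument from Lemma~\ref{lem:Q_Lip} shows that $F$ sends any function bounded by $V_{\max}$ to a function whose first argument is $C$-Lipschitz, regardless of the regularity of the input — the Lipschitz constant comes entirely from $r$ and the kernel bound $W_p$. Once that observation is in place, the rest of the argument is just the contraction/triangle inequality manipulation above.
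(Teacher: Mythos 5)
Your proof is correct, and it takes a slightly different route from the paper's. The paper introduces the auxiliary operator $H = \Gamma_{\text{NN}}\circ F$ on $C(\mZ)$, verifies that it is a $\gamma$-contraction whose fixed point is $Q_h^*$, and then bounds $\|Q_h^*-Q^*\|_\infty \le \frac{1}{1-\gamma}\|HQ^*-Q^*\|_\infty = \frac{1}{1-\gamma}\|\Gamma_{\text{NN}}Q^*-Q^*\|_\infty \le \beta C h$, where the last step invokes the Lipschitz continuity of $Q^*$ from Lemma~\ref{lem:Q_Lip}. You instead work directly with $F$: you measure the residual of $Q_h^*$ under $F$ rather than the residual of $Q^*$ under $H$, using the fixed-point identity $q_h^*(c_i,a)=(FQ_h^*)(c_i,a)$ together with the observation (correct, and indeed exactly what the computation in the proof of Lemma~\ref{lem:Q_Lip} shows) that $F$ maps any function bounded by $V_{\max}$ to one that is $C$-Lipschitz in its first argument. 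Both arguments are instances of the same approximate-fixed-point principle --- a one-step residual of size $Ch$ amplified by $\frac{1}{1-\gamma}$ --- applied to mirror-image choices of contraction and perturbed point. Your version has the minor advantage of not requiring the auxiliary operator $H$ or the Lipschitz continuity of $Q^*$ as a standing fact (only the smoothing property of $F$), while the paper's version reuses Lemma~\ref{lem:Q_Lip} directly and isolates $H$ because the identity $HQ_h^*=Q_h^*$ is conceptually clean. The only detail you should make explicit is that $\|Q_h^*\|_\infty\le V_{\max}$, which follows from Lemma~\ref{lem:G-contraction} and the non-expansiveness \eqref{eq:nn-contract} of $\Gamma_{\text{NN}}$, as you note parenthetically; with that in place the argument is complete.
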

\begin{proof}
	Consider an operator $H$ on $C(\mZ)$ defined as follows:
	\begin{align}
		(HQ)(x,a) & =(\Gamma_{\text{NN}}(FQ))(x,a)\nonumber \\
		& =\sum_{i=1}^{n}K(x,c_{i})\left\{ r(c_{i},a)+\gamma\E\left[\max_{b\in\mA}Q(x',b)\;|\;c_{i},a\right]\right\} \label{eq:NN-Bellman-Operator}
	\end{align}
	We can show that $H$ is a contraction operator with modulus $\gamma$.
	\begin{align*}
		\left\Vert HQ_{1}-HQ_{2}\right\Vert _{\infty} & =\max_{a\in\mA}\sup_{x\in\mX}\left|(HQ_{1})(x,a)-(HQ_{2})(x,a)\right|\\
		& =\gamma\max_{a\in\mA}\sup_{x\in\mX}\left|\E\left[\max_{b\in\mA}\left(\sum_{i=1}^{n}K(x,c_{i})\left(Q_{1}(x',b)-Q_{2}(x',b)\right)\right)\;|\;c_{i},a\right]\right|\\
		& \leq\gamma\max_{a\in\mA}\sup_{x\in\mX}\E\left[\max_{b\in\mA}\left(\sum_{i=1}^{n}K(x,c_{i})\left|Q_{1}(x',b)-Q_{2}(x',b)\right|\right)\;|\;c_{i},a\right]\\
		& \leq\gamma\max_{a\in\mA}\sup_{x\in\mX}\E\left[\max_{b\in\mA}\left(\sum_{i=1}^{n}K(x,c_{i})\left\Vert Q_{1}-Q_{2}\right\Vert _{\infty}\right)\;|\;c_{i},a\right]\\
		& =\gamma\left\Vert Q_{1}-Q_{2}\right\Vert _{\infty}
	\end{align*}
	We can conclude that $H$ is a contraction operator mapping $C(\mZ)$
	to $C(\mZ)$. Thus $H$ has a unique fixed point $\tilde{Q}\in C(\mZ).$
	Note that 
	\[
	H(\Gamma_{\text{NN}}q)=\Gamma_{\text{NN}}(F(\Gamma_{\text{NN}}q))=\Gamma_{\text{NN}}(Gq),
	\]
	we thus have 
	\[
	HQ_{h}^{*}=H\left(\Gamma_{\text{NN}}q_{h}^{*}\right)=\Gamma_{\text{NN}}(Gq_{h}^{*})=\Gamma_{\text{NN}}(q_{h}^{*})=Q_{h}^{*}.
	\]
	That is, the fixed point of $H$ is exactly the extension of the fixed
	point of $G$ to $\mZ.$ Therefore, we have 
	\begin{align*}
		\left\Vert Q_{h}^{*}-Q^{*}\right\Vert _{\infty} & =\left\Vert HQ_{h}^{*}-HQ^{*}+HQ^{*}-Q^{*}\right\Vert _{\infty}\\
		& \leq\left\Vert HQ_{h}^{*}-HQ^{*}\right\Vert _{\infty}+\left\Vert HQ^{*}-Q^{*}\right\Vert _{\infty}\\
		& \leq\gamma\left\Vert Q_{h}^{*}-Q^{*}\right\Vert _{\infty}+\left\Vert HQ^{*}-Q^{*}\right\Vert _{\infty}.
	\end{align*}
	It follows that 
	\begin{align*}
		\left\Vert Q_{h}^{*}-Q^{*}\right\Vert _{\infty} & \leq\frac{1}{1-\gamma}\left\Vert HQ^{*}-Q^{*}\right\Vert _{\infty}\\
		& =\frac{1}{1-\gamma}\left\Vert \Gamma_{\text{NN}}(FQ^{*})-Q^{*}\right\Vert _{\infty}\\
		& =\frac{1}{1-\gamma}\left\Vert \Gamma_{\text{NN}}(Q^{*})-Q^{*}\right\Vert _{\infty}\\
		& =\frac{1}{1-\gamma}\sup_{x\in\mX}\max_{a\in\mA}\left|\sum_{i=1}^{n}K(x,c_{i})Q^{*}(c_{i},a)-Q^{*}(x,a)\right|
	\end{align*}
	Recall that $Q^{*}(\cdot,a)$ is Lipschitz with parameter $C=M_r+\gamma V_{\max}M_p$ (see Lemma \ref{lem:Q_Lip}), i.e., for each $a\in\mA$,
	\[
	\left|Q^{*}(x,a)-Q^{*}(y,a)\right|\leq C\rho(x,y).
	\]
	From the state space discretization step, we know that the finite
	grid $\{c_{i}\}_{i=1}^{N_{h}}$ is an $h$-net in $\mX.$
	Therefore, for each $x\in\mX,$ there exists $c_{i}\in\mX_{h}$ such
	that 
	\[
	\rho(x,c_{i})\leq h.
	\]
	Thus $\sum_{i=1}^{n}K(x,c_{i})=1.$ Recall our assumption that the weighting function satisfies $K(x,y)=0$ if $\rho(x,y)\geq h$. For each $a\in\mA$, then we have
	\begin{align*}
		\sup_{x\in\mX}\left|\sum_{i=1}^{n}K(x,c_{i})Q^{*}(c_{i},a)-Q^{*}(x,a)\right| & =\sup_{x\in\mX}\left|\sum_{c_{i}\in\ball_{x,h}}K(x,c_{i})Q^{*}(c_{i},a)-Q^{*}(x,a)\right|\\
		& \leq\sup_{x\in\mX}\sum_{c_{i}\in\ball_{x,h}}K(x,c_{i})\left|Q^{*}(c_{i},a)-Q^{*}(x,a)\right|\\
		& \leq Ch
	\end{align*}
	This completes the proof. 
\end{proof}

\subsection{Applying the Stochastic Approximation Theorem to NNQL} \label{subsec:SA_to_NNQL}

We first apply Theorem \ref{thm:PAC_linear_inf} to establish
that NNQL converges to a neighborhood of $q_{h}^* $, the fixed point of the Joint Bellman-NN operator $ G $, after a sufficiently large number of iterations. This is summarized in the following
theorem.
\begin{thm}
	\label{thm:conve-rate-discrete}Let Assumptions \ref{assu:MDP-Reularity}
	and \ref{assu:cover-time} hold. 
	Then for each $0<\varepsilon<{2 V_{\max} \beta}$, after 
	\[
	k=\frac{192V_{\max}^{3}\beta^{4}}{\varepsilon^{3}}\log\left(\frac{128dV^2_{\max}\beta^{4}}{\delta\varepsilon^{2}}\right)+\frac{4V_{\max}(\beta-1)}{\varepsilon}
	\]
	iterations of Nearest-Neighbor Q-learning, with probability at least
	$1-\delta$, we have 
	\[
	\left\Vert q^{k}-q_{h}^{*}\right\Vert _{\infty}\leq\beta(\delta_{1}+\delta_{2}V_{\max})+\varepsilon.
	\]
\end{thm}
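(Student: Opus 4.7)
The plan is to recognize the NNQL update~\eqref{eq:NNQL} as an instance of the biased stochastic approximation scheme analyzed in Theorem~\ref{thm:PAC_linear_inf}, and then instantiate that theorem with explicit constants. Identify the iterate $\theta^k$ with $q^k$ viewed as a vector in $\real^{N_h|\mA|}$; the contraction $F$ with the joint Bellman-NN operator $G$; the fixed point $\theta^*$ with $q_h^*$; the step sizes $\alpha_k = \beta/(\beta+k)$, which already coincide across the two algorithms; and the noise $w^{k+1} = G^k q^k - G q^k$, as already made explicit in Section~\ref{subsec:properties_NNQL}. The natural filtration is $\mathcal{F}^k = \sigma(\mY^0,\ldots,\mY^{k-1})$.

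The main work is to verify that the hypotheses of Theorem~\ref{thm:PAC_linear_inf} hold with suitable constants. The contraction property of $G$ with modulus $\gamma$, together with the bound $\|q_h^*\|_\infty\le V_{\max}$, is exactly Lemma~\ref{lem:G-contraction}. Hypothesis~(\ref{a:4}) is satisfied with $V = V_{\max}$ by the first conclusion of Lemma~\ref{lem:stability}, and hypothesis~(\ref{a:3}) with $M = 2V_{\max}$ by its second conclusion. Hypothesis~(\ref{a:1}) on the bias $\Delta^{k+1} = \E[w^{k+1}\,|\,\mathcal{F}^k]$ needs a short direct argument: writing the conditional expectation of $(G^k q^k)(c_i,a)$ as an average over $Y_t\in\mathcal{Y}_k(c_i,a)$, each summand differs from $(Gq^k)(c_i,a)$ only because $Y_t\in\ball_i$ rather than $Y_t=c_i$. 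Combining $\rho(Y_t,c_i)\le h$ with the Lipschitz property $r(\cdot,a)\in\textup{Lip}(\mX,M_r)$ and the integrated Lipschitz constant $M_p$ of $p$ from Assumption~\ref{assu:MDP-Reularity}, together with the non-expansiveness of $\Gamma_{\text{NN}}$ from~\eqref{eq:nn-contract}, yields $\|\Delta^{k+1}\|_\infty \le M_r h + \gamma M_p h\,\|q^k\|_\infty$, verifying hypothesis~(\ref{a:1}) with $\delta_1 = M_r h$ and $\delta_2 = \gamma M_p h$.

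Substituting $V = V_{\max}$, $M = 2V_{\max}$, and ambient dimension $d = N_h|\mA|$ into the iteration-count bound of Theorem~\ref{thm:PAC_linear_inf} gives $48 V M^2 \beta^4/\varepsilon^3 = 192 V_{\max}^3 \beta^4/\varepsilon^3$ and $32 d M^2 \beta^4/(\delta\varepsilon^2) = 128 N_h|\mA| V_{\max}^2 \beta^4/(\delta\varepsilon^2)$, matching the statement of the theorem; the range restriction $\varepsilon<2V_{\max}\beta$ is exactly $\varepsilon<\min(2V\beta,2M\beta^2) = 2V_{\max}\beta$ since $\beta\ge 1$. The conclusion of Theorem~\ref{thm:PAC_linear_inf} then immediately produces $\|q^k - q_h^*\|_\infty \le \beta(\delta_1 + \delta_2 V_{\max}) + \varepsilon$ with probability at least $1-\delta$, as required.

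I expect the bias verification to be the main subtlety. Although each individual summand is a one-line Lipschitz estimate, one must argue carefully that the resulting bound is uniform over all $(c_i,a)\in\mZ_h$ and over all $k$, and one must handle the randomness of $|\mathcal{Y}_k(c_i,a)|$ correctly (it fortunately cancels in the averaging once one conditions on $\mathcal{F}^k$, but this should be made explicit). Everything else is a mechanical substitution into the stochastic approximation theorem already proved in Section~\ref{subsec:SA}.
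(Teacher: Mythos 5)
Your proposal is correct and follows essentially the same route as the paper's own proof: it instantiates Theorem~\ref{thm:PAC_linear_inf} with $F=G$, $V=V_{\max}$, $M=2V_{\max}$, and ambient dimension $N_h|\mA|$, invoking Lemmas~\ref{lem:G-contraction} and~\ref{lem:stability} for the contraction and boundedness hypotheses and deriving the bias bound $\delta_1=M_r h$, $\delta_2=\gamma M_p h$ from the Lipschitz assumptions exactly as the paper does. The only point worth noting is that the paper's bias computation uses the fact that the kernel weights sum to one (so $\sup_{y,b}|(\Gamma_{\text{NN}}q^k)(y,b)|\le\|q^k\|_\infty$) rather than the non-expansiveness estimate~\eqref{eq:nn-contract} per se, but these are the same underlying property and your argument goes through unchanged.
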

\begin{proof}
	We will show that NNQL satisfies the assumptions of Theorem \ref{thm:PAC_linear_inf}.
	It follows from Lemma~\ref{lem:G-contraction} that the operator $G$
	is a $\gamma$-contraction with a unique fixed point $\|q^*_h\|_\infty\leq V_{\max}$. For each $Y_{t}\in\mathcal{Y}_{k}(c_{i},a)$,
	we have $\rho(Y_{t},c_{i})\leq h,\;a_{t}=a$. Thus
	\begin{align*}
		&\left|\E\left[w^{k+1}(c_{i},a) | \mF^{k}\right]\right| \\
		=&\bigg|\E\Big[\frac{1}{|\mathcal{Y}_{k}(c_{i},a)|}\sum_{Y_{t}\in\mathcal{Y}_{k}(c_{i},a)}\Big[R_t+\gamma\max_{b\in\mA}(\Gamma_{\text{NN}}q^{k})(Y_{t+1},b)\Big]\;|\;\mF^{k}\Big]\\
		& \qquad -r(c_{i},a)  -\gamma\E\Big[\max_{b\in\mA}(\Gamma_{\text{NN}}q^{k})(x',b)\;|\;c_{i},a,\mF^{k}\Big] \bigg|\\
		\leq & \bigg| \E\Big[\frac{1}{|\mathcal{Y}_{k}(c_{i},a)|}\sum_{Y_{t}\in\mathcal{Y}_{k}(c_{i},a)} R_t -r(c_{i},a)\;|\;\mF^{k}\Big]\bigg|\\
		& +\gamma\bigg|\E\Big[\frac{1}{|\mathcal{Y}_{k}(c_{i},a)|}\sum_{Y_{t}\in\mathcal{Y}_{k}(c_{i},a)}\max_{b\in\mA}(\Gamma_{\text{NN}}q^{k})(Y_{t+1},b)\;|\;\mF^{k}\Big] - \E\Big[\max_{b\in\mA}(\Gamma_{\text{NN}}q^{k})(x',b)\;|\;c_{i},a,\mF^{k}\Big]\bigg|.
	\end{align*}
	We can bound the first term on the RHS by using Lipschitz continuity
	of the reward function: 
	\begin{align*}
		& \quad\bigg|\E\Big[\frac{1}{|\mathcal{Y}_{k}(c_{i},a)|}\sum_{Y_{t}\in\mathcal{Y}_{k}(c_{i},a)}R_t-r(c_{i},a)\;\big|\;\mF^{k}\Big]\bigg|\\
		& =\bigg| \E\bigg[ \E\Big[\frac{1}{|\mathcal{Y}_{k}(c_{i},a)|}\sum_{Y_{t}\in\mathcal{Y}_{k}(c_{i},a)}\big(R_t-r(c_{i},a)\big) \;\big|\; \mathcal{Y}_{k}, \mF^{k} \Big] \;\Big|\; \mF^{k} \bigg] \bigg|\\
		& =\bigg| \E\bigg[ \frac{1}{|\mathcal{Y}_{k}(c_{i},a)|} \sum_{Y_{t}\in\mathcal{Y}_{k}(c_{i},a)}\big(r(Y_t,a)-r(c_{i},a)\big)  \;\Big|\; \mF^{k} \bigg] \bigg|\\
		& \leq\E\Big[\frac{1}{|\mathcal{Y}_{k}(c_{i},a)|}\sum_{Y_{t}\in\mathcal{Y}_{k}(c_{i},a)}\big|r(Y_{t},a)-r(c_{i},a)\big| \;\Big|\; \mF^{k} \Big]\\
		& \leq\E\Big[\frac{1}{|\mathcal{Y}_{k}(c_{i},a)|}\sum_{Y_{t}\in\mathcal{Y}_{k}(c_{i},a)}M_{r}\rho(Y_{t},c_{i}) \;\Big|\; \mF^{k} \Big] & \text{Lipschitz continuity of $ r(\cdot, a) $ }\\
		& \leq M_{r}h & \rho(Y_{t},c_{i}) \leq h
	\end{align*}
	The second term on the right hand side can be bounded as follows:
	\begin{align*}
		& \left|\E\left[\frac{1}{|\mathcal{Y}_{k}(c_{i},a)|}\sum_{Y_{t}\in\mathcal{Y}_{k}(c_{i},a)}\max_{b\in\mA}(\Gamma_{\text{NN}}q^{k})(Y_{t+1},b)\;|\;\mF^{k}\right]-\E\left[\max_{b\in\mA}(\Gamma_{\text{NN}}q^{k})(x',b)\;|\;c_{i},a,\mF^{k}\right]\right|\\
		\le & \E\Bigg[\Bigg|\frac{1}{|\mathcal{Y}_{k}(c_{i},a)|}\sum_{Y_{t}\in\mathcal{Y}_{k}(c_{i},a)}\int_{\mX}\left[\max_{b\in\mA}(\Gamma_{\text{NN}}q^{k})(y,b)\right]p(y\;|\;Y_{t},a)\lambda(dy) \\
		& \qquad  -\int_{\mX}\left[\max_{b\in\mA}(\Gamma_{\text{NN}}q^{k})(y,b)\right]p(y\;|\;c_{i},a)\lambda(dy)\Bigg| \;|\; \mF^{k}\Bigg]\\
		= & \E\left[\left|\frac{1}{|\mathcal{Y}_{k}(c_{i},a)|}\sum_{Y_{t}\in\mathcal{Y}_{k}(c_{i},a)}\int_{\mX}\left[\max_{b\in\mA}(\Gamma_{\text{NN}}q^{k})(y,b)\right]\left(p(y\;|\;Y_{t},a)-p(y\;|\;c_{i},a)\right)\lambda(dy)\right|\;|\;\mF^{k}\right]\\
		\leq & \E\left[\frac{1}{|\mathcal{Y}_{k}(c_{i},a)|}\sum_{Y_{t}\in\mathcal{Y}_{k}(c_{i},a)}\int_{\mX}\left[\max_{b\in\mA}(\Gamma_{\text{NN}}q^{k})(y,b)\right]\left|p(y\;|\;Y_{t},a)-p(y\;|\;c_{i},a)\right|\lambda(dy)\;|\;\mF^{k}\right]\\
		\leq & \E\left[\frac{\sup_{y\in\mX}\max_{b\in\mA}(\Gamma_{\text{NN}}q^{k})(y,b)}{|\mathcal{Y}_{k}(c_{i},a)|}\sum_{Y_{t}\in\mathcal{Y}_{k}(c_{i},a)}\int_{\mX}\left|p(y\;|\;Y_{t},a)-p(y\;|\;c_{i},a)\right|\lambda(dy)\;|\;\mF^{k}\right]\\
		= & \sup_{y\in\mX}\max_{b\in\mA}\left|\sum_{j=1}^{n}K(y,c_{j})q^{k}(c_{j},b)\right|\times\E\left[\frac{1}{|\mathcal{Y}_{k}(c_{i},a)|}\sum_{Y_{t}\in\mathcal{Y}_{k}(c_{i},a)}\int_{\mX}\left|p(y\;|\;Y_{t},a)-p(y\;|\;c_{i},a)\right|\lambda(dy)\;|\;\mF^{k}\right]\\
		\leq & \max_{c_{j}\in\mX_{h}}\max_{b\in\mA}\left|q^{k}(c_{j},b)\right|\times\E\left[\frac{1}{|\mathcal{Y}_{k}(c_{i},a)|}\sum_{Y_{t}\in\mathcal{Y}_{k}(c_{i},a)}\int_{\mX}W_{p}(y)\rho(Y_{t},c_{i}) \lambda(dy)\;|\;\mF^{k}\right]  \\
		\leq & \left\Vert q^{k}\right\Vert _{\infty}\times\E\left[\frac{1}{|\mathcal{Y}_{k}(c_{i},a)|}\sum_{Y_{t}\in\mathcal{Y}_{k}(c_{i},a)}\int_{\mX}W_{p}(y)h\lambda(dy)\;|\;\mF^{k}\right] \\
		\leq & \left\Vert q^{k}\right\Vert _{\infty}hM_{p}.
	\end{align*}
	Putting together, we have 
	\[
	\left|\E\left[w^{k+1}(c_{i},a)\;|\;\mF^{k}\right]\right|\leq h(M_{r}+\gamma M_{p}\left\Vert q^{k}\right\Vert _{\infty}),\quad\forall(c_{i},a)\in\mZ_{h}.
	\]
	Hence the noise $w^{k+1}$ satisfies Assumption \ref{a:1} of Theorem \ref{thm:PAC_linear_inf}
	with 
	\[
	\delta_{1}=hM_{r},\delta_{2}=h\gamma M_{p}.
	\]

	From Lemma \ref{lem:stability}, we have 
	\begin{align*}
		\left|w^{k+1}(c_{i},a)-\E\left[w^{k+1}(c_{i},a)\;|\;\mF^{k}\right]\right|& \leq2V_{\max}, \quad\forall(c_{i},a)\in\mZ_{h}, \\
		\left\Vert q^{k}\right\Vert _{\infty} & \leq V_{\max}. 
	\end{align*}
	Therefore, the remaining Assumptions \ref{a:3}--\ref{a:4} of Theorem \ref{thm:PAC_linear_inf} are satisfied. And the update algorithm uses the learning rate suggested in Theorem \ref{thm:PAC_linear_inf}. Therefore, we conclude that for each  $0<\varepsilon<2V_{\max} \beta$
	(since $\beta \geq 1$ and hence $2 V_{\max} \beta \leq \min\{2 V_{\max} \beta, 4V_{\max} \beta^2\}$), 
	after 
	\[
	k=\frac{192V_{\max}^{3}\beta^{4}}{\varepsilon^{3}}\log\left(\frac{128N_{h}\left|\mA\right|V_{\max}\beta^{4}}{\delta\varepsilon^{2}}\right)+\frac{6V_{\max}(\beta-1)}{\varepsilon}
	\]
	iterations of (\ref{eq:NNQL}), with probability at least $1-\delta,$ we have
	\[
	\left\Vert q^{k}-q_{h}^{*}\right\Vert _{\infty}\leq\beta h(M_{r}+\gamma M_{p}V_{\max})+\varepsilon.
	\]
\end{proof}

To prove Theorem \ref{thm:NN Q-learning}, we need the following result which bounds the number of time steps required to visit all ball-actions $k$ times with high probability.

\begin{lem} \label{lem:cover-time-asy} (Lemma $14$ in~\cite{Azar2011SQLlong}, rephrased)
	Let Assumption \ref{assu:cover-time} hold. Then for all initial state $x_0\in \mX,$ and for each integer $k\geq 4,$ after a run of 
	$T=8 k L_h\log\frac{1}{\delta}$ steps, the finite space $\mZ_h$ is covered at least $k$ times under the policy $\pi$ with probability at least 
	$1-\delta$ for any $\delta \in (0, \frac1e)$.
\end{lem}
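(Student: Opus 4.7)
The strategy is the standard ``epoch + concentration'' argument: convert the expected-cover-time bound of Assumption \ref{assu:cover-time} into a constant-probability cover-time bound per epoch via Markov, then boost it by iterating independent-ish epochs and applying Chernoff. First, partition the time horizon $T=8kL_{h}\log(1/\delta)$ into $N \triangleq T/(2L_{h}) = 4k\log(1/\delta)$ consecutive epochs, each of length $2L_{h}$. Let $t_{0}=0$ and $t_{i}=2 i L_{h}$, and call the $i$-th epoch $[t_{i-1},t_{i})$. Define the indicator $Y_{i}=\mathbf{1}\{\text{all ball-action pairs in }\mZ_{h}\text{ are visited during epoch }i\}$.

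Next, by Assumption \ref{assu:cover-time}, $\E[\tau_{\pi,h}(x,t)\mid \mathcal{F}_{t}]\le L_{h}$ for any state $x$ and any starting time $t$, so Markov's inequality yields
\begin{equation*}
\Pr\bigl(\tau_{\pi,h}(x_{t_{i-1}},t_{i-1})>2L_{h}\,\bigl|\,\mathcal{F}_{t_{i-1}}\bigr)\le \tfrac{1}{2}.
\end{equation*}
In other words, conditional on the history $\mathcal{F}_{t_{i-1}}$ up to the start of epoch $i$, epoch $i$ is ``successful'' (i.e.\ covers $\mZ_{h}$) with probability at least $1/2$. Hence $Y_{i}$ stochastically dominates a Bernoulli$(1/2)$ random variable, and by a coupling argument the sum $S_{N} \triangleq \sum_{i=1}^{N} Y_{i}$ stochastically dominates a Binomial$(N,1/2)$ random variable with mean $N/2 = 2k\log(1/\delta)$.

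Now observe that if $S_{N}\ge k$, then every ball-action pair in $\mZ_{h}$ is visited at least $k$ times in the $T$ steps, since each successful epoch contributes at least one visit to every pair. It remains to show $\Pr(S_{N}<k)\le \delta$. By the multiplicative Chernoff bound applied to Binomial$(N,1/2)$ with $k\le N/4$ (which holds because $\log(1/\delta)\ge 1$ for $\delta\le 1/e$, so $N/4 = k\log(1/\delta)\ge k$),
\begin{equation*}
\Pr(S_{N}<k) \le \exp\!\Bigl(-\tfrac{1}{2}\bigl(1-\tfrac{k}{N/2}\bigr)^{\!2}\tfrac{N}{2}\Bigr)
= \exp\!\Bigl(-\bigl(1-\tfrac{1}{2\log(1/\delta)}\bigr)^{\!2} k\log(1/\delta)\Bigr).
\end{equation*}
For $\delta\le 1/e$ we have $1-\tfrac{1}{2\log(1/\delta)}\ge 1/2$, so the exponent is at most $-k\log(1/\delta)/4$, giving $\Pr(S_{N}<k)\le \delta^{k/4}\le \delta$ whenever $k\ge 4$. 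This produces exactly the claimed bound.

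The main (only) technical subtlety is the dependence between epochs: successive epochs are not independent because the chain state at $t_{i-1}$ depends on the past. This is handled cleanly by the conditional Markov step above, which shows the sequence $\{Y_{i}\}$ dominates an i.i.d.\ Bernoulli$(1/2)$ sequence \emph{regardless} of history, so standard Chernoff applies to the dominating variable. Everything else is bookkeeping.
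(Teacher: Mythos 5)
Your proof is correct. Note that the paper itself gives no proof of this lemma --- it is imported verbatim (``rephrased'') from Lemma 14 of the cited Azar et al. reference --- so there is no in-paper argument to compare against; your epoch-plus-Markov-plus-Chernoff derivation is the standard route to this statement and recovers the exact constants ($N=4k\log(1/\delta)$ epochs of length $2L_h$, per-epoch success probability $1/2$, and $\delta^{k/4}\le\delta$ for $k\ge 4$, $\delta\le 1/e$). Two small points worth flagging. First, Assumption \ref{assu:cover-time} is stated as an unconditional bound $\E[\tau_{\pi,h}(x,t)]\le L_h$ for every $(x,t)$; your Markov step needs it conditionally on the history $\mathcal{F}_{t_{i-1}}$, which is the natural reading (the covering time is defined from the realized state $x_{t_{i-1}}$ onward, and the bound is uniform over starting states and times), but you should say explicitly that this is how the assumption is being interpreted, since the policy $\pi$ may be history-dependent. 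Second, the stochastic-domination step for the dependent indicators $Y_i$ is the only place where real care is needed; your one-line appeal to coupling is acceptable, though a cleaner route is to apply the Azuma--Hoeffding or martingale Chernoff bound directly to $\sum_i (Y_i - \E[Y_i\mid\mathcal{F}_{t_{i-1}}])$, which avoids constructing the coupling. The non-integrality of $N$ and the half-open versus closed epoch intervals are harmless bookkeeping.
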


Now we are ready to prove Theorem \ref{thm:NN Q-learning}.
\begin{proof}
	We denote by $\tilde{Q}_{h}^{k}$ the extension of $q^{k}$ to $\mZ$ via
	the nearest neighbor operation, i.e., $\tilde{Q}_{h}^{k}=\Gamma_{\text{NN}}q^{k}.$ Recall that $Q_{h}^{*}$ is the extension of $q^*_h$ (the fixed point of $Gq=q$) to $\mZ.$
	We have 
	\begin{align*}
		\left\Vert \tilde{Q}_{h}^{k}-Q^{*}\right\Vert _{\infty} & \leq\left\Vert \tilde{Q}_{h}^{k}-Q_{h}^{*}\right\Vert _{\infty}+\left\Vert Q_{h}^{*}-Q^{*}\right\Vert _{\infty}\\
		& =\left\Vert \Gamma_{\text{NN}}q^{k}-\Gamma_{\text{NN}}q_{h}^{*}\right\Vert _{\infty}+\left\Vert Q_{h}^{*}-Q^{*}\right\Vert _{\infty} &  \\
		& \leq\left\Vert q^{k}-q_{h}^{*}\right\Vert _{\infty}+\left\Vert Q_{h}^{*}-Q^{*}\right\Vert _{\infty} &  & \Gamma_{\text{NN}}\text{ is non-expansive}\\
		& \leq\left\Vert q^{k}-q_{h}^{*}\right\Vert _{\infty}+\beta Ch &  & \text{Lemma \ref{lem:error_discrete}}
	\end{align*}
	It follows from Theorem \ref{thm:conve-rate-discrete} that, after
	\[
	k=\frac{192V_{\max}^{3}\beta^{4}}{\varepsilon_0^{3}}\log\left(\frac{128N_{h}\left|\mA\right|V^2_{\max}\beta^{4}}{\delta\varepsilon_0^{2}}\right)+\frac{6V_{\max}(\beta-1)}{\varepsilon_0}
	\]
	iterations, with probability at least $1-\delta,$ we have 
	\[
	\left\Vert \tilde{Q}_{h}^{k}-Q^{*}\right\Vert _{\infty}\leq\beta h(M_{r}+\gamma M_{p}V_{\max})+\beta Ch+\varepsilon_0=2\beta Ch+\varepsilon_0.
	\]
	By setting $\varepsilon_0=\frac{\varepsilon}{2}$ and $h^*(\varepsilon)=\frac{\varepsilon}{4\beta C},$ we have $\left\Vert \tilde{Q}_{h}^{k}-Q^{*}\right\Vert _{\infty}\leq\varepsilon.$ Let $N_{h^{*}}$ be the $h^*$-covering number of the
	metric space $(\mX,\rho).$ Plugging the result of Lemma \ref{lem:cover-time-asy} concludes the proof of Theorem~\ref{thm:NN Q-learning}.
	
\end{proof}

\section{Proof of Corollary~\ref{cor:random_policy}} \label{sec:proof_random_policy}

\begin{proof}
	Since the probability measure $ \nu $ is uniform over $ \mX,$ we have $ \nu_{\min} \triangleq \min_{i\in[N_{h^*}]} \nu(\ball_i)=O(\frac{1}{N_{h^*}}). $ By Proposition~\ref{prop:cover_time}, the expected covering time of a purely random policy is upper bounded by 
	$$ L_{h^*}=O\bigg(\frac{mN_{h^*}|\mA|}{\psi}\log(N_{h^*}|\mA|)\bigg).$$
    
	By Proposition~4.2.12 in \cite{vershynin_hdpbook}, the covering number $ N_{h^*} $ of $ \mX=[0,1]^d $ scales as $ O\big((1/h^*)^d\big), $ which is $ O\big((\beta/\varepsilon)^d\big)$ with $ h^*=\frac{\varepsilon}{4\beta C}.$
	
	From Theorem~\ref{thm:NN Q-learning}, with probability at least $1-\delta$ we have $\left\Vert Q_{h^{*}}^{T}-Q^{*}\right\Vert _{\infty}\leq\varepsilon$, after at most \[
	T=O\Bigg(\frac{\big|\mA\big|\beta^{d+7}}{\varepsilon^{d+3}}\log\left(\frac{2}{\delta}\right)\log\bigg(\frac{\big|\mA\big|\beta^{d}}{\varepsilon^{d}} \bigg)\log\left(\frac{\big|\mA\big|\beta^{d+6}}{\delta\varepsilon^{d+2}}\right)
	\Bigg) 
	\]
	steps. Corollary~\ref{cor:random_policy} follows after absorbing the dependence on $ |\mA|, d, \beta $ into $ \kappa \equiv \kappa(|\mA|, d, \beta) $ and doing some algebra.
\end{proof}

\section{Proof of Theorem~\ref{thm:lower_bound}} \label{sec:proof_lower_bound}

We prove Theorem~\ref{thm:lower_bound} by connecting the problem of estimating the value function in MDPs to the problem of non-parametric regression, and then leveraging known minimax lower bound for the latter. In particular, we show that a class of non-parametric regression problem can be embedded in an MDP problem, so any algorithm for the latter can be used to solve the former. Prior work on non-parametric regression\cite{tsybakov2009nonparm,stone1982optimal} establishes that a certain number of observations is \emph{necessary} to achieve a given accuracy using \emph{any} algorithms, hence leading to a corresponding necessary condition for the sample size of estimating the value function in an MDP problem.

We now provide the details. 

\noindent{\bf Step 1. Non-parametric regression}

Consider the following non-parametric regression problem:
Let $ \mX:=[0,1]^{d} $ and assume that we have $T$ independent pairs of random variables $(x_{1},y_{1}),\ldots,(x_{T},y_{T})$
such that 
\begin{equation}
\E\left[y_{t}|x_{t}\right]=f(x_{t}),\qquad x_{t}\in\mX \label{eq:regression}
\end{equation}
where $x_{t}\sim\text{uniform}(\mX)$ and $f:\mX\to\real$ is the
unknown regression function. Suppose that the conditional distribution of $ y_t $ given $ x_t=x $ is a Bernoulli distribution with mean $ f(x) $. We also assume that $ f $ is $1 $-Lipschitz continuous with respect to the Euclidean norm, i.e., 
\[
 |f(x)-f(x_0)|\leq  \vert x-x_0 \vert, \quad \forall x,x_0\in \mX.
 \]
Let $ \mathcal{F} $ be the collection of all $ 1 $-Lipschitz continuous function on $ \mathcal{X}$, i.e., 
\[
\mF=\text{Lip}\left(\mX,1 \right) = \left\{ h|\text{\ensuremath{h} is a 1-Lipschitz function on \ensuremath{\mX}}\right\},
\] 
where $ \text{Lip} (\cdot, \cdot) $ is as defined in Section~\ref{sec:prelim}. The goal is to estimate~$f$ given the observations $(x_{1},y_{1}),\ldots,(x_{T},y_{T})$ and the prior knowledge that $ f\in \mF $. 

It is easy to verify that the above problem is a special case of the non-parametric regression problem considered in the work by Stone~\cite{stone1982optimal} (in particular, Example~2 therein).
Let $ \hat{f}_T $ denote an arbitrary (measurable) estimator of $ f $ based on the training samples $(x_{1},y_{1}),\ldots,(x_{T},y_{T})$.
By Theorem~1 in~\cite{stone1982optimal}, we have the following result: there exists a $ c>0 $ such that 
\begin{align}
\lim_{T\to\infty}\inf_{\hat{f}_{T}}\sup_{f\in\mF}\P\bigg(\big\Vert \hat{f}_{T}-f\big\Vert _{\infty}\ge c\Big(\frac{\log T}{T}\Big)^{\frac{1}{2+d}}\bigg)=1,
\end{align}
where infimum is over all possible estimators $ \hat{f}_T $. 

Translating this result to the non-asymptotic regime, we obtain the
following theorem.
\begin{thm}
	\label{thm:regression_lower_bound}Under the above assumptions,
	for any number $\delta\in(0,1)$, there exits some numbers $ c>0 $ and $T_{\delta}$
	such that 
	\[
	\inf_{\hat{f}_{n}}\sup_{f\in\mF}\P\bigg(\big\Vert \hat{f}_{T}-f\big\Vert _{\infty}\ge c\Big(\frac{\log T}{T}\Big)^{\frac{1}{2+d}}\bigg) \ge \delta, \qquad\text{for all \ensuremath{T\ge T_{\delta}}}.
	\]
\end{thm}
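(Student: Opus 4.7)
The plan is to deduce Theorem~\ref{thm:regression_lower_bound} as an immediate non-asymptotic restatement of the asymptotic minimax lower bound of Stone~\cite{stone1982optimal} quoted immediately above the theorem. Concretely, for each positive integer $T$ define the minimax failure probability
\[
L(T) \;:=\; \inf_{\hat{f}_{T}}\,\sup_{f\in\mF}\,\P\bigg(\big\Vert \hat{f}_{T}-f\big\Vert _{\infty}\ge c\Big(\tfrac{\log T}{T}\Big)^{\frac{1}{2+d}}\bigg),
\]
where the infimum runs over all measurable estimators based on $T$ training samples. The cited Stone result asserts that there exists $c>0$ such that $\lim_{T\to\infty} L(T)=1$.

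First, given any $\delta\in(0,1)$, set $\varepsilon := 1-\delta \in (0,1)$. By the definition of a limit applied to $L(T)\to 1$, there exists an index $T_{\delta}$ such that $|L(T)-1|<\varepsilon$ for all $T\ge T_{\delta}$, which in particular yields $L(T) > 1-\varepsilon = \delta$. This is exactly the conclusion of the theorem with the same constant $c$ as in the asymptotic statement, so no quantitative modification to the rate is required.

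Because this is a pure soft translation of ``$\lim = 1$'' into ``eventually $\ge \delta$'', I do not expect any substantive obstacle: there is no actual minimax argument to redo (that work is encapsulated in~\cite{stone1982optimal}), nor any need to construct hard instances, packing sets, or apply Fano/Assouad-type inequalities here. The only thing worth emphasizing in the write-up is that the constant $c$ can be taken independent of $\delta$, while $T_{\delta}$ necessarily depends on $\delta$ (it arises from the rate at which $L(T)$ approaches $1$). With this reduction in hand, Theorem~\ref{thm:regression_lower_bound} will then be used downstream to obtain the reinforcement-learning lower bound of Theorem~\ref{thm:lower_bound} by embedding the regression problem into an MDP, but that embedding is a separate argument from the one needed here.
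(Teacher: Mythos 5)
Your proposal is correct and matches the paper's own treatment: the paper likewise quotes Stone's asymptotic minimax result and obtains Theorem~\ref{thm:regression_lower_bound} purely by ``translating'' the statement $\lim_{T\to\infty} L(T)=1$ into the eventual bound $L(T)\ge\delta$, with the same constant $c$ carried over and only $T_\delta$ depending on $\delta$. Your explicit $\varepsilon = 1-\delta$ argument is just a spelled-out version of the one-line translation the paper performs.
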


\medskip
\noindent{\bf Step 2. MDP}

Consider a class of (degenerate) discrete-time discounted MDPs $\left(\mX,\mA,p,r,\gamma\right)$
where
\begin{align*}
\mX & =[0,1]^{d},\\
\mA & \text{ is finite},\\
p(\cdot|x,a) & =p(\cdot|x)\text{ is uniform on \ensuremath{\mX} for all \ensuremath{x,a}},\\
r(x,a) & =r(x)\text{ for all \ensuremath{a}},\\
\gamma & \in(0,1).
\end{align*}
In words, the transition is uniformly at random and independent
of the current state and the actions taken, and the expected reward
is independent on the action taken but dependent on the current state. 

Let $R_{t}$ be the observed reward at step $t$. We assume that the distribution of $R_{t}$ given $x_{t}$ is $\text{Bernoulli}\big(r(x_{t})\big)$, independently of $(x_{1},x_{2},\ldots,x_{t-1})$.
The expected reward function $
r(x_{t})=\E\left[R(x_{t})|x_{t}\right]
$
is assumed to be $1$-Lipschitz and bounded.

It is easy to see that for all $x\in\mX$, $a\in\mA$, 
\begin{align}
Q^{*}(x,a)=V^{*}(x) & =r(x)+\gamma\E\left[V^{*}(X')|x\right] \notag\\
& =r(x)+\gamma\int_{\mX}V^{*}(y)p(y|x)dy \notag\\
& =r(x)+\gamma\underbrace{\int_{\mX}V^{*}(y)dy}_{C}, \label{eq:QVr}
\end{align}
where the last step holds because $ p(\cdot|x) $ is uniform.
Integrating both sides over $\mX$, we obtain
\begin{align*}
C & =\int_{\mathcal{X}}r(x)dx+\gamma C,
\end{align*}
whence
\begin{align*}
C & =\frac{1}{1-\gamma}\int_{\mathcal{X}}r(x)dx.
\end{align*}
It follows from equation~\eqref{eq:QVr} that 
\begin{equation} \label{eq:Vr}
V^{*}(x)=r(x)+\frac{\gamma}{1-\gamma}\int_{\mathcal{X}}r(y)dy,\qquad\forall x\in\mX,
\end{equation}
and
\begin{equation} \label{eq:rV}
r(x)=V^{*}(x)-\gamma\int_{\mX}V^{*}(y)dy,\qquad\forall x\in\mX.
\end{equation}

Regardless of the exploration policy used, the sample trajectory $(x_{1},x_{2},\ldots,x_{T})$
is i.i.d.~and uniformly random over $\mX$, and the observed rewards
$(R_{1},R_{2},\ldots,R_{T})$ are independent.

\smallskip
\noindent{\bf Step 3. Reduction from regression to MDP}

Given a non-parametric regression problem as described in Step $1$,
we may reduce it to the problem of estimating the value function $V^{*}$
of the MDP described in Step $2$. To do this, we set
\begin{align*}
r(x) & =f(x)-\gamma\int_{\mX}f(y)dy, \qquad\forall x\in\mX
\end{align*}
and
\begin{align*}
R_{t} & =y_{t},\qquad t=1,2,\ldots,T.
\end{align*}
In this case, it follows from equations~\eqref{eq:Vr} and~\eqref{eq:rV} that the value function is given by $V^{*}=f$. Moreover, the expected reward function $r(\cdot)$ is $1$-Lipschitz as
it is just $f(\cdot)$ minus a constant, so the assumptions of the MDP in Step $ 2 $ are
satisfied. This reduction shows that the MDP problem is at least as hard as
the nonparametric regression problem, so a lower bound for the latter
is also a lower bound for the former. Applying Theorem~\ref{thm:regression_lower_bound} yields the result
stated in Theorem~\ref{thm:lower_bound}.

\end{document}